\theoremstyle{plain}
\newtheorem{theorem}{Theorem}[section]
\theoremstyle{definition}
\theoremstyle{remark}
\newcommand\red[1]{\textcolor{red}{#1}}
\newcommand{\tablestyle}[2]{\setlength{\tabcolsep}{#1}\renewcommand{\arraystretch}{#2}\centering\footnotesize}
\title{Counterfactual Conservative Q Learning for Offline Multi-agent Reinforcement Learning}
\author{%
    Jianzhun Shao\thanks{Equal contribution.}, Yun Qu$\footnotemark[1]$, Chen Chen, Hongchang Zhang, Xiangyang Ji\\
    Department of Automation\\
    Tsinghua University, Beijing, China\\
    \texttt{\{sjz18, qy22, hc-zhang19\}@mails.tsinghua.edu.cn}\\
    \texttt{cclvr@163.com}\\
    \texttt{xyji@tsinghua.edu.cn}\\
}
\begin{document}
\maketitle
\begin{abstract}
Offline multi-agent reinforcement learning is challenging due to the coupling effect of both  distribution shift issue common in offline setting and the high dimension issue common in multi-agent 
setting, making the action out-of-distribution~(OOD) and value overestimation phenomenon excessively severe. To
mitigate this problem, we  propose
a novel multi-agent  offline RL algorithm, named \underline{C}ounter\underline{F}actual \underline{C}onservative \underline{Q}-\underline{L}earning~(CFCQL) to conduct conservative 
value estimation. Rather than regarding all the agents as a high dimensional single one and directly applying single agent methods  to it, CFCQL calculates conservative regularization for each agent separately 
in a counterfactual way and then linearly combines them to realize an overall conservative value 
estimation. We prove that it still enjoys the underestimation property and the
performance guarantee as those single agent conservative methods  do, but the induced regularization 
and safe policy improvement bound are independent of the agent number, which is therefore theoretically
superior to the direct treatment referred to above, especially when the agent number is large. We further conduct experiments  on four environments including both discrete and continuous action settings on both existing and our man-made datasets,  demonstrating that CFCQL outperforms existing methods on most datasets and even with a remarkable margin on 
some of them.
\end{abstract}
\section{Introduction}

Online Reinforcement Learning~(Online RL) needs frequently deploying untested  policies to environment for data collection and policy optimization, making it dangerous and inefficient to apply in the real-world scenarios~(e.g. autonomous vehicle teams).
While, Offline Reinforcement Learning~(Offline RL) aims to learn policies from a fixed dataset rather than from interacting with the environment, and therefore is suitable for the real applications with highly safety requirements or without efficient simulators~\citep{levine2020offline}. 

Directly applying off-policy RL to the offline setting may fail due to overestimation~\citep{fujimoto2019off,lee2022offline}. Existing works usually tackle this problem by pessimism. They either utilize behavior regularization to constrain the learning policy close to the behavior policy induced by the dataset~\citep{wu2019behavior,kumar2019stabilizing,fujimoto2021minimalist}, or  conduct conservative(pessimistic) value iteration to mitigate unexpected overestimation~\citep{kostrikov2021offline,kumar2020conservative}.  It has been demonstrated both theoretically and empirically that these methods can achieve comparable performance to their online counterparts under some conditions~\citep{kumar2020conservative}.

Though cooperative Multi-agent Reinforcement Learning~(MARL) has gained extraordinary success in various multiplayer video games such as Dota~\cite{berner2019dota}, StarCraft~\cite{samvelyan2019starcraft} and soccer~\cite{kurach2019google}, applying current MARL in real scenarios is still challenging due to the same safety and efficiency concerns in single-agent setting, then it is worth conducting investigation for offline RL in multi-agent setting. 
Compared with single-agent setting, 
offline RL in multi-agent setting has its own difficulty. On the one hand,  the action space explodes exponentially as the agent number increases, then an arbitrary joint action is more likely to be an OOD action given the fixed dataset size,  making the OOD phenomenon more severe. This will further exacerbate the issues of extrapolation error and overestimation in the policy evaluation process and thus induce an unexpected or  even disastrous final policy~\citep{yang2021believe}. On the other hand, as the core of MARL, the agents need to consider not only their own actions but also other agents’ actions as well as contributions to the global return in order to achieve an overall high performance, which undoubtedly increases the difficulty for theoretical analysis. Besides, instead of just guaranteeing single policy improvement, the bounded team performance is also a key concern to offline MARL.

There exist few works  to combine MARL with offline RL.
\citet{pan2022plan} uses Independent Learning~\citep{tan1993multi} as a simple solution, i.e., each agent regards others as part of the environment and independently performs offline RL learning.   Although mitigating the joint action OOD issue through decoupling the agents’s learning completely,   it essentially still adopts a single-agent paradigm to learn and thus cannot enjoy the recent progress on MARL that a centralized value function can empower better team coordination~\citep{rashid2018qmix,lowe2017multi}. To utilize the advantage of centralized training with decentralized execution~(CTDE), \citet{yang2021believe} applies implicit constraint approach on the value decomposition network~\citep{sunehag2018value}, which alleviates the extrapolation error and gains some performance improvement empirically. Although it proposes some theoretical analysis for the convergence property of value function, whether the team performance can be bounded from below as agents number increases  remains still unknown.

In this paper, we introduce a novel offline MARL algorithm called  Counterfactual Conservative Q-Learning (CFCQL) and aim to address the overestimation issue rooted from joint action OOD phenomenon.  It adopts CTDE paradigm to realize team coordination, and incorporates the state-of-the-art offline RL algorithm CQL  to conduct conservative value estimation.  CQL is preferred due to its theoretical guarantee and flexibility of implementation.  One direct treatment is to regard all the agents as a single one and conduct standard CQL learning on the joint policy space which we call MACQL.  However, too much conservatism will be generated in this way since the induced penalty  can be exponentially large in the joint policy space.  Instead, CFCQL separately regularizes each agent in a counterfactual way to avoid too much conservatism.
Specifically, each agent separately contributes CQL regularization for the global Q value and then a weighted average is  used as an overall regularization. When calculating agent $i$’s regularization term, rather than sampling OOD actions  from the joint action space as MACQL does, we only sample OOD actions for agent $i$ and leave other agents’ actions sampled in the dataset. We  prove that CFCQL enjoys underestimation property as CQL does and the safe policy improvement guarantee independent of the agents number $n$, which is advantageous to MACQL especially under the situation with a large $n$.

We conduct experiments on $1$ man-made environment Equal Line, and $3$ commonly used multi-agent environments:  StarCraft~\uppercase\expandafter{\romannumeral2}~\citep{samvelyan2019starcraft}, Multi-agent Particle Environment~\citep{lowe2017multi}, and Multi-agent MuJoCo~\citep{peng2021facmac}, including both discrete and continuous action space setting.
With datasets collected by \citet{pan2022plan} and ourselves, our method outperforms existing methods in most settings and even with a large margin on some of them.

We summarize our contributions as follows: 
(1) we propose a novel offline MARL method CFCQL based on CTDE paradigm to address the overestimation issue and achieve team coordination at the same time. 
(2) We theoretically compare CFCQL and MACQL to show that CFCQL is advantagous to MACQL on the performance bounds and safe policy improvement guarantee as agent number is large. 
(3) In hard multi-agent offline tasks with both  discrete and continuous action space, our method shows superior performance to the state-of-the-art. 
\section{Related Works}
\textbf{Offline RL.}
Standard RL algorithms are
especially prone to fail due to erroneous value overestimation
induced by the distributional shift between the dataset and the learning policy. Theoretically, it is proved that pessimism can alleviate overestimation effectively and achieve good performance even with non-perfect data coverage \cite{buckman2020importance,jin2021pessimism,liu2020provably,kumar2021should,rashidinejad2021bridging,xie2021bellman,zanette2021provable,cheng2022adversarially}.
In the algorithmic line,  there  are broadly two categories: uncertainty based ones and behavior regularization based ones. Uncertainty based approaches attempt to estimate the epistemic uncertainty of Q-values or dynamics, and then utilize this uncertainty to pessimistically estimating Q in a model-free manner \cite{agarwal2020optimistic,wu2021uncertainty,an2021uncertainty}, or  conduct learning on the pessimistic dynamic model  in a model-based manner \cite{yu2020mopo, kidambi2020morel}. 
Behavior regularization based algorithms constrain the learned policy to lie close to the behavior policy   in either explicit or implicit ways \cite{kumar2019stabilizing,fujimoto2019off,wu2019behavior,kostrikov2021offline,kumar2020conservative,fujimoto2021minimalist}, and is advantageous over uncertainty based methods in computation efficiency and memory consumption. Among these class of algorithms, CQL\cite{kumar2020conservative} is 
preferred  due to its superior empirical performance and flexibility of implementation.

\textbf{MARL.} 
A popular paradigm for multi-agent reinforcement learning is centralized training with decentralized execution~(CTDE)~\citep{rashid2018qmix,lowe2017multi}. Centralized training inherits the idea of joint action learning~\citep{claus1998dynamics}, empowering the agents with better team coordination. And decentralized execution enjoys the deploying flexibility of independent learning~\citep{tan1993multi}. In CTDE, some value-based works concentrate on decomposing the single team reward to all agents by value function factorization~\citep{sunehag2018value,rashid2018qmix}, based on which they further derive and extend the Individual-Global-Max~(IGM) principle for policy optimality analysis~\citep{son2019qtran,wang2020qplex,rashid2020weighted,wan2021greedy}. Another group of works focus on the actor-critic framework, using a centralized critic to guide each agent's policy update~\citep{lowe2017multi,foerster2018counterfactual}. Some variants of PPO~\citep{schulman2017proximal}, including IPPO~\citep{de2020independent}, MAPPO~\citep{yu2021surprising}, and HAPPO~\citep{kuba2021trust} also show great potential in solving complex tasks. All these works use the online learning paradigm, therefore disturbed by extrapolation error when transferred to the offline setting.

\textbf{Offline MARL.} To make MARL applied in the more practical scenario with safety and training efficiency concerns, offline MARL is proposed. \citet{jiang2021offline} shows the challenge of applying BCQ~\citep{fujimoto2019off} to independent learning, and \citet{pan2022plan} uses zeroth-order optimization for better coordination among agents' policies. Although it empirically shows fast convergence rate, the policies trained by independent learning have no theoretical guarantee for the team performance. With CTDE paradigm, \citet{yang2021believe} adopts the same treatment as 
\citet{peng2019advantage} and \citet{nair2020accelerating} to avoid sampling new actions from current policy, and   \citet{tsengoffline}  regards the offline MARL as a sequence modeling problem, solving it by supervised learning. As a result, both methods' performance relies heavily on the data quality. In contrast, CFCQL does not require the learning policies stay close to the behavior policy, and therefore performs well on datasets with low quality. Meanwhile, CFCQL is theoretically guaranteed to be safely improved, which ensures its performance on datasets with high quality.  
\section{Preliminary}
\subsection{MARL Symbols}
We use a \emph{decentralised
partially observable Markov decision process} (Dec-POMDP)~\citep{oliehoek2016concise} $G=\langle S,\mathbf{A},I,P,r,Z,O,n,\gamma \rangle$ to model a fully collaborative multi-agent task with $n$ agents, where $s\in S$ is the global state. At time step $t$, each agent $i\in I\equiv\{1,...,n\}$ chooses an action $a^i\in A$, forming the joint action $\mathbf{a}\in \mathbf{A}\equiv A^{n}$. $T(s'|s, \mathbf{a}):S\times\mathbf{A}\times S \rightarrow [0,1]$ is the environment's state transition distribution. All agents share the same reward function $r(s, \mathbf{a}):S\times\mathbf{A}\rightarrow \mathbb{R}$. $\gamma \in [0, 1)$ is the discount factor.
Each agent $i$ has its local observations $o^i\in O$ drawn from the observation function $Z(s, i):S\times I \rightarrow O$ and chooses an action by its policy $\pi^i (a^i|o^i):O \rightarrow \Delta([0,1]^{|A|})$. The agents’ joint policy $\bm{\pi}:=\prod_{i=1}^n \pi^i$ induces a joint \emph{action-value function}: $Q^{\bm{\pi}}(s, \mathbf{a})=\mathbb{E}[R|s,\mathbf{a}]$, where $R=\sum^\infty_{t=0}\gamma^t r_{t}$ is the discounted accumulated team reward. We assume $\forall r, |r|\le R_{\max}$. The goal of MARL is to find the optimal joint policy $\bm{\pi}^*$ such that $Q^{\bm{\pi}^*}(s,\mathbf{a})\ge Q^{\bm{\pi}}(s, \mathbf{a})$, for all $\bm{\pi}$ and $(s,\mathbf{a})\in S\times \mathbf{A}$. In offline MARL, we need to learn the optimal $\bm{\pi}^*$ from a fixed dataset sampled by an unknown behaviour policy $\bm{\beta}$, and we can not deploy any new policy to the environment to get feedback. 
\subsection{Value Functions in MARL}
In MARL settings with discrete action space, we can maintain for each agent a local-observation-based Q function $Q^i(o^i, a^i)$, and define the local policy $\pi^i:=\arg\max_a Q^i(o^i,a)$. To train local policies with single team reward, \citet{rashid2018qmix} proposes QMIX, using a global  $Q^{tot}$ to model the cumulative team reward. Define the Bellman operator $\mathcal{T}$ as $\mathcal{T}Q(s,\bm{a})=r+\gamma \max_{\bm{a}'}Q(s',\bm{a}')$. QMIX aims to minimize the temporal difference:
\begin{equation}\label{eqn:td1}
    \hat{Q}^{tot}_{k+1} \leftarrow \arg\min_Q \mathbb{E}_{s,\bm{a},s'\sim\mathcal{D}}[(Q(s,\bm{a})-\hat{\mathcal{T}}\hat{Q}^{tot}_k(s,\bm{a}))^2],
\end{equation}
where $\hat{\mathcal{T}}$ is the empirical Bellman operator using samples from the replay buffer $\mathcal{D}$, and $\hat{Q}^{tot}$ is the empirical global Q function, computed from $s$ and $Q^i$s  represented by a neural network satisfying $\partial Q^{tot}/\partial Q^i\ge 0$. With such restriction, QMIX can ensure the Individual-Global-Max principle that $\arg\max_{\textbf{a}} Q^{tot}(s,\textbf{a})=(\arg\max_{a^1}Q^1(o^1,a^1),...,\arg\max_{a^n}Q^n(o^n,a^n))$. 
 For continuous action space,
 a neural network is used to represent the local policy $\pi^i$ for each agent.
\citet{lowe2017multi} proposes MADDPG, maintaining a centralized critic $Q^{tot}$ to directly guide the local policy update by gradient descent. And the update rule of $Q^{tot}$ is similar to Eq.~\ref{eqn:td1}, just replacing $\mathcal{T}$ with $\mathcal{T}^{\bm{\pi}}$. $\mathcal{T}^{\bm{\pi}}Q=r+\gamma P^{\bm{\pi}}Q$, where $P^{\bm{\pi}}Q(s, \bm{a})=\mathbb{E}_{s'\sim T(\cdot|s,\bm{a}),\bm{a}'\sim\bm{\pi}(\cdot|s')}[Q(s',\bm{a}')]$. 

\subsection{Conservative Q-Learning}\label{subsec:cql}
Offline RL is prone to the overestimation of value functions. For an intuitive explanation of the underlying cause, please refer to Appendix E.1.
\citet{kumar2020conservative} proposes Conservative Q-Learning~(CQL), adding a regularizer to the Q function to address the overestimation issue, which maximizes the Q function of the state-action pairs from the dataset distribution $\beta$, while penalizing the Q function sampled from a new distribution $\mu$~(e.g., current policy $\pi$). 
The conservative policy evaluation is shown as follows:
\begin{equation}\label{eqn:cql}
    \hat{Q}_{k\!+\!1}\! \leftarrow\! \mathop{\arg\min}\limits_{Q} \alpha[\mathbb{E}_{s\sim\! \mathcal{D}, a\!\sim \!\mu}\![Q(s,a)]
    \!-\!\mathbb{E}_{s\sim\!\mathcal{D},a\!\sim\! \beta}\![Q(s,a)]]\\
    +
    \frac{1}{2} \mathbb{E}_{s,a,s^{\prime}\sim \mathcal{D}}[(Q(s,a)-\hat{\mathcal{T}}^{\pi} \hat{Q}_{k}(s,a))^{2}].
\end{equation}
As shown in Theorem 3.2 in \citet{kumar2020conservative}, with a large enough $\alpha$, we can obtain a Q function, whose expectation over actions is lower than that of the real Q, then the overestimation issue can be mitigated.

\section{Proposed Method}\label{method}
We first show the overestimation problem of multi-agent value functions in Sec.~\ref{subsec:overestimation}, and a direct solution by the naive extension of CQL to multi-agent settings in Sec.~\ref{subsec:macql}, which we call MACQL. In Sec.~\ref{sec:cfcql}, we propose Counterfactual Conservative Q-Learning (CFCQL) and further demonstrate that it can realize conservatism in a more mild and controllable way which is independent of the number of agents. 
Then we compare MACQL and CFCQL from the aspect of safe improvement performance. In Sec.~\ref{sec:alg}, we present our novel and practical offline RL algorithm.
\begin{wrapfigure}{r}{0.54\columnwidth}
\begin{center}
\includegraphics[width=\linewidth]{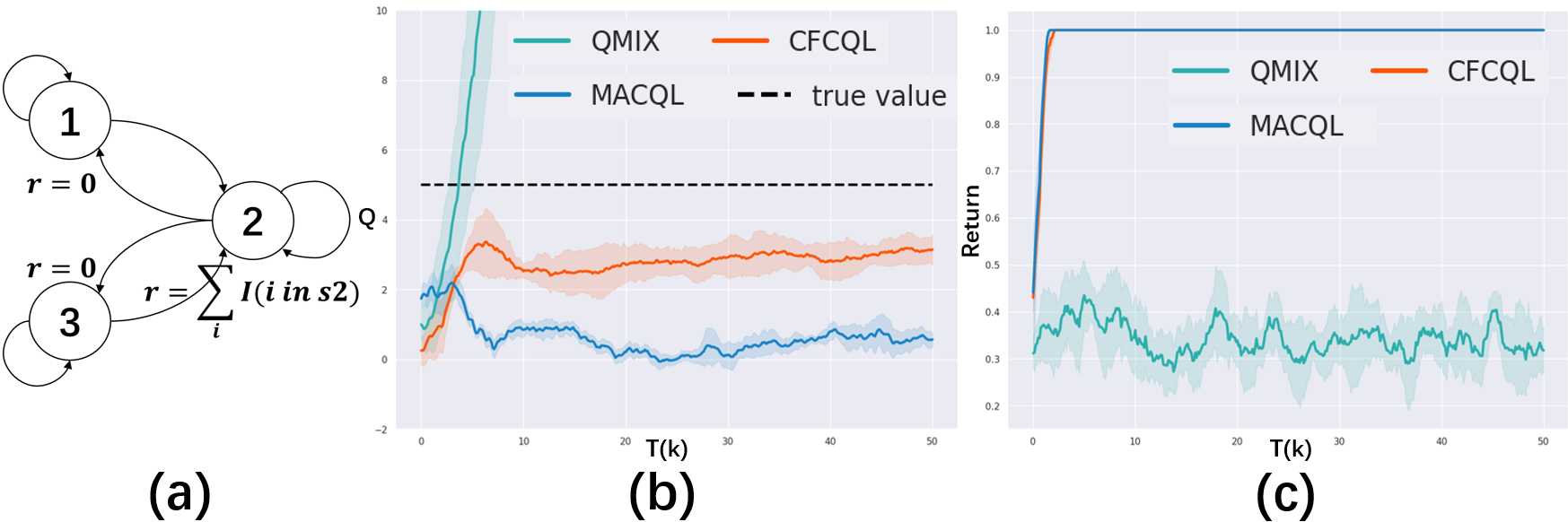}
\end{center}
\caption{(a) A decomposable Multi-Agent MDP which urges the  agents moves to or stays in $S2$. The number of agents is set to 5. (b) The learning curve of the estimated joint state-action value function in the given MMDP. The true value is indicated by the dotted line, which represents the maximum discounted return. (c) The performance curve of the corresponding methods.}
\vspace{-20pt}
\label{img:demo}
\end{wrapfigure}

\subsection{Overestimation in Offline MARL}\label{subsec:overestimation}
The Q-values in online learning usually face the overestimation problem~\citep{ackermann2019reducing,pan2021regularized}, which becomes more severe in offline settings due to distribution shift and offline policy evaluation ~\citep{fujimoto2019off}. Offline MARL suffers from the same issue even worse since the joint action space explodes exponentially as the agent number increases, then an arbitrary joint action is more likely to be an OOD action given the fixed dataset size. To illustrate this phenomenon, we design a toy Multi-Agent Markov Decision Process (MMDP) as shown in Fig.\ref{img:demo}(a). There are five agents and three states. All agents are randomly initialized and need learn to take three actions (stay, move up, move down) to move to or stay in the state $S2$. The dataset consists of $1000$ samples with reward larger than $0.8r_{\max}$. As shown in Fig.~\ref{img:demo}(b), if not specially designed for offline setting, naive application of QMIX leads to exponentially explosion of value estimates and the sub-optimal policy performance, Fig.~\ref{img:demo}(c).

\subsection{Multi-Agent Conservative Q-Learning}\label{subsec:macql}

When treating multiple agents as a unified single agent and only regarding the joint policy of the $\bm{\beta},\bm{\mu},\bm{\pi}$, that is $\boldsymbol{\beta}(\boldsymbol{a}|s)=\prod^n_{i=1}\beta^i(a^i|s)$, $\boldsymbol{\mu}(\boldsymbol{a}|s)=\prod^n_{i=1}\mu^i(a^i|s)$ and $\boldsymbol{\pi}(\boldsymbol{a}|s)=\prod^n_{i=1}\pi^i(a^i|s)$, we can extend the single-agent CQL to CTDE multi-agent setting straightforwardly and derive the conservative policy evaluation style directly as follows:
\begin{equation}\label{eqn:macql}
    \hat{Q}^{tot}_{k+1} \leftarrow \mathop{\arg\min}\limits_{Q} \alpha\bigg[\mathbb{E}_{s\sim \mathcal{D}, \boldsymbol{a}\sim \boldsymbol{\mu}}[Q(s,\boldsymbol{a})]\\
    -\mathbb{E}_{s\sim\mathcal{D},\boldsymbol{a}\sim \boldsymbol{\beta}}[Q(s,\boldsymbol{a})]\bigg]
    +
    \hat{\mathcal{E}}_{\mathcal{D}}(\bm{\pi},Q,k),
\end{equation}
where we represent the temporal difference~(TD) error concisely as $\hat{\mathcal{E}}_{\mathcal{D}}(\bm{\pi},Q,k):=\frac{1}{2} \mathbb{E}_{s,\boldsymbol{a},s^{\prime}\sim \mathcal{D}}[(Q(s,\boldsymbol{a})-\hat{\mathcal{T}}^{\boldsymbol{\pi}} \hat{Q}^{tot}_{k}(s,\boldsymbol{a}))^{2}]$.
In the rest of the paper, we mainly focus on the global Q function of a centralized critic and omit the superscript ``tot'' for ease of expression.

Similar to CQL, MACQL can learn a lower-bounded Q-value with a proper $\alpha$, then the overestimation issue in offline MARL can be mitigated. According to Theorem 3.2 in  \citet{kumar2020conservative}, the degree of pessimism relies heavily on $D_{CQL}(\bm{\pi}, \bm{\beta})(s):=\sum_{\bm{a}}\bm{\pi}(\bm{a}|s)[\frac{\bm{\pi}(\bm{a}|s)}{\bm{\beta}(\bm{a}|s)}-1]$. However, as we will show in Sec.~\ref{sec:cfcql}, $D_{CQL}$ expands exponentially as the number of agents $n$ increases, resulting in an over-pessimistic value function and a mediocre policy improvement guarantee for MACQL.

\subsection{Counterfactual Conservative Q-Learning}\label{sec:cfcql}
Intuitively, we aim to learn a value function that prevents the overestimation of the policy value, which is expected not too far away from the true value in the meantime. Similar to MACQL as introduced above, we add a regularization in the policy evaluation process, by penalizing  the values for OOD actions and rewarding the values for in-distribution actions.   However, rather than regarding all the agents as a single one and conducting standard CQL method in the joint policy space, in CFCQL method, each agent separately contributes  regularization for the global Q value and then a weighted average is used as an overall regularization. When calculating agent $i$’s regularization term, instead of sampling OOD actions from the
joint action space as MACQL does, we only sample OOD
actions for agent $i$ and leave other agents’ actions sampled
in the dataset.
Specifically, using $\bm{a}^{-i}$ to denote the joint actions other than agent $i$. Eq.~\ref{eqn:cftd} is our proposed policy evaluation iteration:
\begin{equation}\label{eqn:cftd}
    \hat{Q}_{k+1} \leftarrow \mathop{\arg\min}\limits_{Q} \alpha\bigg[\sum^n_{i=1}\lambda_i\mathbb{E}_{s\sim \mathcal{D}, a^i\sim\mu^i, \bm{a}^{-i}\sim\bm{\beta}^{-i}}[Q(s,\boldsymbol{a})]\\
    -\mathbb{E}_{s\sim\mathcal{D},\boldsymbol{a}\sim \boldsymbol{\beta}}[Q(s,\boldsymbol{a})]\bigg]+
    \hat{\mathcal{E}}_{\mathcal{D}}(\bm{\pi},Q,k),
\end{equation}
where $\alpha, \lambda_i$ are hyper-parameters, and $\sum^n_{i=1}\lambda_i=1, \lambda_i\ge 0.$ 
We refer to our method as 'counterfactual' because its structure bears resemblance to counterfactual methods in MARL~\citep{foerster2018counterfactual}. This involves obtaining each agent's counterfactual baseline by marginalizing out a single agent's action while keeping the other agents' actions fixed. The intuitive rationale behind employing a counterfactual-like approach is that by individually penalizing each agent's out-of-distribution (OOD) actions while holding the others' actions constant from the datasets, we can effectively mitigate the out-of-distribution problem in offline MARL with reduced pessimism, as illustrated in the rest of this section. 

The theoretical analysis is arranged as follows: In Theorem~\ref{thm:cflb} we show the new policy evaluation leads to a milder conservatism on value function, which still lower bounds the true value. Then we compare the conservatism degree between CFCQL and MACQL in Theorem~\ref{thm:D2D}. In Theorem~\ref{thm:jj} and Theorem~\ref{thm:cfbetter} we show the effect of milder conservatism brought by counterfactual treatments on the performance guarantee~(All proofs are defered to Appendix~A).

\begin{theorem}[Equation~\ref{eqn:cftd} results in a lower bound of value function] The value of the policy under the Q function from Equation~\ref{eqn:cftd}, $\hat{V}^{\bm{\pi}}(s)=\mathbb{E}_{\bm{\pi}(\bm{a}|s)}[\hat{Q}^{\bm{\pi}}(s,\bm{a})]$, lower-bounds the true value of the policy obtained via exact policy evaluation, $V^{\bm{\pi}}(s)=\mathbb{E}_{\bm{\pi}(\bm{a}|s)}[Q^{\bm{\pi}}(s,\bm{a})]$, when $\bm{\mu}=\bm{\pi}$, according to: $\forall s \in \mathcal{D}, \hat{V}^{\bm{\pi}}(s) \leq V^{\bm{\pi}}(s)-\alpha\left[\left(I-\gamma P^{\bm{\pi}}\right)^{-1} \mathbb{E}_{\bm{\pi}}\left[\sum^n_{i=1}\lambda_i\frac{\pi^i}{\beta^i}-1\right]\right](s)+\left[\left(I-\gamma P^{\bm{\pi}}\right)^{-1} \frac{C_{r, T, \delta} R_{\max }}{(1-\gamma) \sqrt{|\mathcal{D}|}}\right].$ Define $D^{CF}_{CQL}(\bm{\pi}, \bm{\beta})(s):=\sum_{\bm{a}}\bm{\pi}(\bm{a}|s)[\sum^n_{i=1}\lambda_i\frac{\pi^i(a^i|s)}{\beta^i(a^i|s)}-1]$. If $\alpha>\frac{C_{r, T, \delta} R_{\max }}{1-\gamma} \cdot \max _{s \in \mathcal{D}} \frac{1}{ \sqrt{|\mathcal{D}(s)|}} \cdot D^{CF}_{CQL}(\bm{\pi}, \bm{\beta})(s)^{-1}, \forall s \in \mathcal{D}, \hat{V}^{\bm{\pi}}(s) \leq V^{\bm{\pi}}(s)$, with $probability\ge 1-\delta$. When $\mathcal{T}^{\bm{\pi}}=\hat{\mathcal{T}}^{\bm{\pi}}$, then any $\alpha > 0$ guarantees $\hat{V}^{\bm{\pi}}(s) \leq V^{\bm{\pi}}(s) , \forall s \in \mathcal{D}.$
\label{thm:cflb}
\end{theorem}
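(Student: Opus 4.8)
The plan is to mirror the proof of Theorem 3.2 in \citet{kumar2020conservative}, the crucial difference being that the counterfactual sampling in Equation~\ref{eqn:cftd} collapses a \emph{product} of importance ratios into a \emph{sum} of per-agent ratios. First I would find the fixed point of the iteration by treating Equation~\ref{eqn:cftd} as an unconstrained convex minimization in the tabular values $\{Q(s,\bm{a})\}$ and setting the pointwise derivative to zero. For a fixed pair $(s,\bm{a})$, the TD term contributes weight $d(s)\hat{\beta}(\bm{a}|s)$, the negative regularizer contributes $-\alpha d(s)\hat{\beta}(\bm{a}|s)$, and the $i$-th positive regularizer contributes $\alpha d(s)\lambda_i\,\mu^i(a^i|s)\prod_{j\neq i}\hat{\beta}^j(a^j|s)$, where $d(s)$ is the state marginal of $\mathcal{D}$. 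Solving the resulting scalar equation yields the iteration
\begin{equation*}
\hat{Q}_{k+1}(s,\bm{a}) = \hat{\mathcal{T}}^{\bm{\pi}}\hat{Q}_k(s,\bm{a}) - \alpha\frac{\sum_{i=1}^n \lambda_i\,\mu^i(a^i|s)\prod_{j\neq i}\hat{\beta}^j(a^j|s) - \hat{\beta}(\bm{a}|s)}{\hat{\beta}(\bm{a}|s)}.
\end{equation*}

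The key step, and the one that makes the whole counterfactual construction pay off, is the algebraic simplification of this penalty. Since $\hat{\beta}(\bm{a}|s)=\prod_{j=1}^n\hat{\beta}^j(a^j|s)$, each numerator term divides cleanly: $\mu^i(a^i|s)\prod_{j\neq i}\hat{\beta}^j(a^j|s)\big/\hat{\beta}(\bm{a}|s) = \mu^i(a^i|s)/\hat{\beta}^i(a^i|s)$. Hence the penalty reduces to $\sum_{i=1}^n\lambda_i\,\mu^i(a^i|s)/\hat{\beta}^i(a^i|s) - 1$, a sum of $n$ single-agent ratios rather than the product ratio $\bm{\mu}(\bm{a}|s)/\bm{\beta}(\bm{a}|s)$ that MACQL would produce; this is exactly why the induced conservatism will later be shown to be independent of $n$. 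Setting $\bm{\mu}=\bm{\pi}$ and taking the expectation over $\bm{a}\sim\bm{\pi}(\cdot|s)$ gives a per-state scalar recursion for $\hat{V}_{k+1}(s)$.

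Finally I would solve the fixed-point equation. Writing the recursion in vector form over states and using that $\mathcal{T}^{\bm{\pi}}$ is affine of the form $r + \gamma P^{\bm{\pi}}$, the converged value satisfies $\hat{V}^{\bm{\pi}} = (I-\gamma P^{\bm{\pi}})^{-1}[\,\hat{r} - \alpha\,\mathbb{E}_{\bm{\pi}}[\sum_i\lambda_i \pi^i/\hat{\beta}^i - 1]\,]$, while $V^{\bm{\pi}}=(I-\gamma P^{\bm{\pi}})^{-1}r$. Subtracting and bounding the empirical-Bellman sampling error $|(\hat{\mathcal{T}}^{\bm{\pi}}-\mathcal{T}^{\bm{\pi}})\hat{Q}|$ by $C_{r,T,\delta}R_{\max}/((1-\gamma)\sqrt{|\mathcal{D}(s)|})$ via the standard concentration argument (holding with probability $\ge 1-\delta$) produces the stated inequality, since $(I-\gamma P^{\bm{\pi}})^{-1}=\sum_{t\ge 0}(\gamma P^{\bm{\pi}})^t$ is entrywise nonnegative and thus preserves the direction of the bound. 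The two corollaries then follow: the penalty term dominates the error term precisely when $\alpha$ exceeds the stated threshold involving $D^{CF}_{CQL}(\bm{\pi},\bm{\beta})(s)^{-1}$, giving $\hat{V}^{\bm{\pi}}\le V^{\bm{\pi}}$; and when $\hat{\mathcal{T}}^{\bm{\pi}}=\mathcal{T}^{\bm{\pi}}$ the error term vanishes, so any $\alpha>0$ suffices. I expect the only real subtlety to be verifying that the penalty vector $\mathbb{E}_{\bm{\pi}}[\sum_i\lambda_i\pi^i/\hat{\beta}^i-1]$ is entrywise nonnegative, so that the sign of the $(I-\gamma P^{\bm{\pi}})^{-1}$-weighted term is controlled and the division by $D^{CF}_{CQL}$ is legitimate; this follows because $\sum_i\lambda_i=1$ and each $\mathbb{E}_{a^i\sim\pi^i}[\pi^i/\hat{\beta}^i]\ge 1$ by Cauchy--Schwarz (equivalently, nonnegativity of a $\chi^2$-type divergence), exactly as in the single-agent case.
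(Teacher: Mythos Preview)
Your proposal is correct and follows essentially the same route as the paper's proof: differentiate Equation~\ref{eqn:cftd} pointwise to extract the per-$(s,\bm{a})$ penalty, simplify via the factorization $\hat{\bm{\beta}}=\prod_j\hat{\beta}^j$ to obtain the sum $\sum_i\lambda_i\mu^i/\hat{\beta}^i-1$, average under $\bm{\pi}$, solve the fixed point using $(I-\gamma P^{\bm{\pi}})^{-1}$, and append the standard sampling-error term. The only cosmetic difference is that the paper establishes $D^{CF}_{CQL}\ge 0$ by rewriting it as $\sum_i\lambda_i\sum_{a^i}(\pi^i-\beta^i)^2/\beta^i$, whereas you invoke Cauchy--Schwarz/$\chi^2$-nonnegativity for each $\mathbb{E}_{\pi^i}[\pi^i/\hat{\beta}^i]\ge 1$; these are the same inequality.
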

Theorem~\ref{thm:cflb} mainly differs from Theorem 3.2 in \citet{kumar2020conservative} on the specific form of the divergence: $D_{CQL}$ and $D^{CF}_{CQL}$, both of which determine the degrees of conservatism when $\bm{\pi}$ and $\bm{\beta}$ are different. Empirically, we find $D_{CQL}$ generally becomes too large when the number of agents expands, resulting in a low, even negative value function. To measure the scale difference between $D_{CQL}(\bm{\pi},\bm{\beta})(s):=\prod_{i}(D_{CQL}(\pi^i,\beta^i)(s)+1)-1$ and $D^{CF}_{CQL}(\bm{\pi},\bm{\beta})(s):=\sum_{i}\lambda_iD_{CQL}(\pi^i,\beta^i)(s)$, we have Theorem ~\ref{thm:D2D}:
\begin{theorem} $\forall s, \bm{\pi},\bm{\beta}$, one has $0\le D^{CF}_{CQL}(\bm{\pi}, \bm{\beta})(s)\le D_{CQL}(\bm{\pi}, \bm{\beta})(s)$, and the following inequality holds:
\begin{equation}
    \frac{D_{CQL}(\bm{\pi},\bm{\beta})(s)}{D^{CF}_{CQL}(\bm{\pi},\bm{\beta})(s)}\ge \exp\left(\sum^n_{i=1,i\ne j} KL(\pi^i(s)||\beta^i(s))\right),
\end{equation}
where $j=\arg\max_k \mathbb{E}_{\pi^k}\frac{\pi^k(s)}{\beta^k(s)}$ represents the agent whose policy distribution is the most far away from the dataset.
\label{thm:D2D}
\end{theorem}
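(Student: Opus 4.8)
The plan is to reduce the entire statement to elementary inequalities among the per-agent scalars $m_i := 1 + D_{CQL}(\pi^i,\beta^i)(s)$, one for each agent. Expanding the single-agent definition gives $D_{CQL}(\pi^i,\beta^i)(s)=\sum_{a^i}\pi^i(a^i|s)\left[\frac{\pi^i(a^i|s)}{\beta^i(a^i|s)}-1\right]=\sum_{a^i}\frac{\pi^i(a^i|s)^2}{\beta^i(a^i|s)}-1$, which is exactly the $\chi^2$-divergence $\chi^2(\pi^i(s)\,\|\,\beta^i(s))\ge 0$; hence $m_i=\mathbb{E}_{\pi^i}\frac{\pi^i(s)}{\beta^i(s)}\ge 1$ for every $i$. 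In this notation the two divergences read $D^{CF}_{CQL}(\bm{\pi},\bm{\beta})(s)=\sum_{i}\lambda_i(m_i-1)$ and $D_{CQL}(\bm{\pi},\bm{\beta})(s)=\prod_i m_i-1$, so the theorem becomes a statement purely about the $m_i$ and the weights $\lambda_i$.

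For the sandwich in Part 1, the lower bound $D^{CF}_{CQL}\ge 0$ is immediate since $\lambda_i\ge 0$ and each $m_i-1\ge 0$. For the upper bound, I would use $\sum_i\lambda_i=1$ to rewrite $D^{CF}_{CQL}=\sum_i\lambda_i m_i-1$, bound the convex combination by its largest entry, $\sum_i\lambda_i m_i\le\max_i m_i$, and then note that because every $m_i\ge 1$, dropping factors only decreases a product, so $\max_i m_i\le\prod_i m_i$. Chaining these yields $D^{CF}_{CQL}\le\max_i m_i-1\le\prod_i m_i-1=D_{CQL}$.

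For the ratio in Part 2, let $j=\arg\max_k m_k$, which is precisely the agent named in the statement since $\mathbb{E}_{\pi^k}\frac{\pi^k}{\beta^k}=m_k$. I would bound numerator and denominator separately so as to isolate the common factor $m_j-1$. The same convex-combination argument bounds the denominator, $D^{CF}_{CQL}=\sum_i\lambda_i(m_i-1)\le m_j-1$, while in the numerator I would factor out agent $j$ and use $\prod_{i\ne j}m_i\ge 1$:
\[
\prod_i m_i-1=m_j\prod_{i\ne j}m_i-1\ge m_j\prod_{i\ne j}m_i-\prod_{i\ne j}m_i=(m_j-1)\prod_{i\ne j}m_i.
\]
Dividing the two estimates cancels the factor $m_j-1$ and leaves $D_{CQL}/D^{CF}_{CQL}\ge\prod_{i\ne j}m_i$. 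Finally, Jensen's inequality applied to the concave logarithm gives $\log m_i=\log\mathbb{E}_{\pi^i}\frac{\pi^i}{\beta^i}\ge\mathbb{E}_{\pi^i}\log\frac{\pi^i}{\beta^i}=KL(\pi^i(s)\,\|\,\beta^i(s))$, whence $\prod_{i\ne j}m_i\ge\exp\!\left(\sum_{i\ne j}KL(\pi^i(s)\,\|\,\beta^i(s))\right)$, which is the claimed bound.

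Each step is individually elementary, so the main obstacle is organizational rather than technical: choosing the two-sided estimates in Part 2 so that $m_j-1$ cancels cleanly, i.e. bounding the denominator above by $m_j-1$ while simultaneously extracting $(m_j-1)$ from the numerator through $\prod_{i\ne j}m_i\ge 1$. I would also flag the degenerate case $\bm{\pi}=\bm{\beta}$, where all $m_i=1$ and both divergences vanish; there the ratio statement is understood to hold whenever $D^{CF}_{CQL}(\bm{\pi},\bm{\beta})(s)>0$, and the right-hand side $\exp(0)=1$ is consistent in the limit.
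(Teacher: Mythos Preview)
Your proof is correct and follows essentially the same route as the paper: both reduce to the per-agent scalars $m_i=\mathbb{E}_{\pi^i}\frac{\pi^i}{\beta^i}\ge 1$, bound the convex combination $\sum_i\lambda_i m_i$ by $m_j=\max_i m_i$, arrive at $\prod_{i\ne j}m_i$ as the controlling quantity, and invoke Jensen's inequality to pass from $\log m_i$ to $KL(\pi^i\|\beta^i)$. The only organizational difference is that the paper first bounds the shifted ratio $\frac{D_{CQL}+1}{D^{CF}_{CQL}+1}=\frac{\prod_i m_i}{\sum_i\lambda_i m_i}\ge\prod_{i\ne j}m_i$ and then appeals to the monotonicity $\frac{a}{b}\ge\frac{a+1}{b+1}$ for $a\ge b>0$ to transfer this to $\frac{D_{CQL}}{D^{CF}_{CQL}}$, whereas you work directly with the unshifted ratio by factoring $(m_j-1)$ out of both numerator and denominator. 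Your route is slightly more self-contained in that it avoids the extra transfer step; the paper's route makes the product structure $\prod_i m_i$ appear more immediately. Both are equally elementary, and your handling of the degenerate case $\bm{\pi}=\bm{\beta}$ is an appropriate clarification that the paper leaves implicit.
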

Since $D^{CF}_{CQL}$ can be written as 
$\sum_{i}\lambda_iD_{CQL}(\pi^i,\beta^i)(s)$, 
it can be regarded as a weighted average of each individual agents' policy deviations from its individual behavior policy. Therefore, the scale of $D^{CF}_{CQL}$ is independent of the number of agents. Instead, Theorem ~\ref{thm:D2D} shows that  both $D_{CQL}$ and the ratio of $D_{CQL}$ and $D^{CF}_{CQL}$ explode exponentially as the number of agents $n$ increases. 

Then we discuss the influence of $D^{CF}_{CQL}$ on the property of safe policy improvement. Define $\hat{M}$ as the MDP induced by the transitions observed in the dataset. Let the empirical return $J(\bm{\pi},\hat{M})$ be the discounted return for any policy $\bm{\pi}$ in $\hat{M}$ and $\hat{Q}$ be the ﬁxed point of Equation~\ref{eqn:cftd}. Then the optimal policy of CFCQL is equivalently obtained by solving:
\begin{equation}\label{eqn:pistar}
    \bm{\pi}^*_{CF}(\bm{a}|s)\leftarrow \arg\max_{\bm{\pi}} J(\bm{\pi},\hat{M})-\alpha\frac{1}{1-\gamma}\mathbb{E}_{s\sim d^{\bm{\pi}}_{\hat{M}}(s)}[D^{CF}_{CQL}(\bm{\pi},\bm{\beta})(s)].
\end{equation}
See the proof in Appendix~A.3.
Eq.~\ref{eqn:pistar} can be regarded as adding a $D^{CF}_{CQL}$-related regularizer to $J(\bm{\pi},\hat{M})$. Replacing $D^{CF}_{CQL}$ with $D_{CQL}$ in the regularizer, we can get a similar form of $\bm{\pi}^*_{MA}$ for MACQL. We next discuss the performance bound of CFCQL and MACQL on the true MDP $M$. If we assume the full coverage of $\beta^i$ on $M$, we have:
\begin{theorem}
\label{thm:jj}
Assume $\forall s,a,i, \beta^i(a|s)\ge\epsilon$, then with probability $\geq 1-\delta$,
\begin{align}
&J(\bm{\pi}^*_{MA},M)\ge J(\bm{\pi}^*,M)-\frac{\alpha}{1-\gamma}(\frac{1}{\epsilon^n}-1) - \text{sampling error}, \nonumber \\
&J(\bm{\pi}^*_{CF},M)\ge J(\bm{\pi}^*,M)-\frac{\alpha}{1-\gamma}(\frac{1}{\epsilon}-1)-\text{sampling error}, \nonumber
\end{align}
where $\bm{\pi}^*=\arg\max_{\bm{\pi}}J(\bm{\pi}, M)$,  i.e. the optimal policy, and $\text{sampling error}$ is a constant dependent on the MDP itself and  $\mathcal{D}$.
\end{theorem}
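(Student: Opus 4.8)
The plan is to build directly on the equivalent optimization characterization in Equation~\ref{eqn:pistar}. Since $\bm{\pi}^*_{CF}$ maximizes $J(\bm{\pi},\hat{M})-\frac{\alpha}{1-\gamma}\mathbb{E}_{s\sim d^{\bm{\pi}}_{\hat{M}}}[D^{CF}_{CQL}(\bm{\pi},\bm{\beta})(s)]$ over \emph{all} joint policies, it in particular beats the true optimum $\bm{\pi}^*$ on this regularized objective evaluated in $\hat{M}$. Writing out that optimality inequality and rearranging gives
\begin{align*}
J(\bm{\pi}^*_{CF},\hat{M})\ge{}& J(\bm{\pi}^*,\hat{M})-\tfrac{\alpha}{1-\gamma}\mathbb{E}_{s\sim d^{\bm{\pi}^*}_{\hat{M}}}[D^{CF}_{CQL}(\bm{\pi}^*,\bm{\beta})(s)]\\
&+\tfrac{\alpha}{1-\gamma}\mathbb{E}_{s\sim d^{\bm{\pi}^*_{CF}}_{\hat{M}}}[D^{CF}_{CQL}(\bm{\pi}^*_{CF},\bm{\beta})(s)].
\end{align*}
By the nonnegativity $D^{CF}_{CQL}\ge 0$ established in Theorem~\ref{thm:D2D}, the last term can be dropped, leaving a clean lower bound on $J(\bm{\pi}^*_{CF},\hat{M})$ controlled only by the divergence evaluated at $\bm{\pi}^*$.

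The crucial and genuinely new step is to upper bound that divergence under the coverage assumption $\beta^i(a|s)\ge\epsilon$. Using the decomposition $D^{CF}_{CQL}(\bm{\pi},\bm{\beta})(s)=\sum_i\lambda_i D_{CQL}(\pi^i,\beta^i)(s)$ together with $D_{CQL}(\pi^i,\beta^i)(s)+1=\sum_{a}(\pi^i(a|s))^2/\beta^i(a|s)\le \frac{1}{\epsilon}\sum_a(\pi^i(a|s))^2\le\frac{1}{\epsilon}$ (the last step since $\pi^i\in[0,1]$), each per-agent term is at most $\frac1\epsilon-1$, so a convex combination with $\sum_i\lambda_i=1$ yields $D^{CF}_{CQL}(\bm{\pi},\bm{\beta})(s)\le\frac1\epsilon-1$ for every $s$, \emph{independent of $n$}. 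The identical computation applied to the product form $D_{CQL}(\bm{\pi},\bm{\beta})(s)+1=\prod_i(D_{CQL}(\pi^i,\beta^i)(s)+1)\le\epsilon^{-n}$ gives $D_{CQL}(\bm{\pi},\bm{\beta})(s)\le\frac{1}{\epsilon^n}-1$. Substituting these two bounds is exactly what produces the $\frac1\epsilon-1$ versus $\frac1{\epsilon^n}-1$ coefficients of the two claimed inequalities, and the MACQL case is word-for-word the same argument run with its own optimality condition and $D_{CQL}$.

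Finally I would transfer the bound from the empirical MDP $\hat{M}$ back to the true MDP $M$ via a standard sampling-error concentration result of the form $|J(\bm{\pi},\hat{M})-J(\bm{\pi},M)|\le\zeta(\bm{\pi})$, holding simultaneously for the relevant policies with probability $\ge1-\delta$. This is obtained through the simulation lemma, which expresses the return gap as a discounted sum of the empirical reward and transition errors along $d^{\bm{\pi}}_{\hat{M}}$, followed by Hoeffding concentration of $\hat{r}$ and $\hat{P}$ on each sufficiently-visited state and a union bound — precisely the machinery already invoked for the $C_{r,T,\delta}R_{\max}/((1-\gamma)\sqrt{|\mathcal{D}|})$ term in Theorem~\ref{thm:cflb}. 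Applying it once to $\bm{\pi}^*_{CF}$ and once to $\bm{\pi}^*$ and chaining, $J(\bm{\pi}^*_{CF},M)\ge J(\bm{\pi}^*_{CF},\hat{M})-\zeta(\bm{\pi}^*_{CF})\ge J(\bm{\pi}^*,\hat{M})-\frac{\alpha}{1-\gamma}(\frac1\epsilon-1)-\zeta(\bm{\pi}^*_{CF})\ge J(\bm{\pi}^*,M)-\frac{\alpha}{1-\gamma}(\frac1\epsilon-1)-(\zeta(\bm{\pi}^*)+\zeta(\bm{\pi}^*_{CF}))$, collecting the two error terms into the single ``sampling error'' constant.

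The main obstacle is keeping the concentration step honest — in particular the union bound over states and the dependence of $\zeta$ on the joint-action count $|A|^n$ — but since the statement only asks for ``a constant dependent on the MDP itself and $\mathcal{D}$'', this part can be imported wholesale from the CQL analysis rather than re-derived, and all the genuinely new content lives in the two $n$-contrasting divergence bounds of the second step.
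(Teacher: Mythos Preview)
Your proposal is correct and follows essentially the same route as the paper: use the optimality of $\bm{\pi}^*_{CF}$ (resp.\ $\bm{\pi}^*_{MA}$) for the regularized empirical objective against the competitor $\bm{\pi}^*$, drop the nonnegative divergence term, upper-bound $D^{CF}_{CQL}\le\frac{1}{\epsilon}-1$ and $D_{CQL}\le\frac{1}{\epsilon^n}-1$ via the coverage assumption exactly as you do, and then transfer from $\hat{M}$ to $M$ by two applications of the CQL-style sampling-error bound. The paper's proof is organized identically, with the same divergence bounds and the same definition of the sampling error constant.
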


Theorem~\ref{thm:jj} shows that when sampling error and $\alpha$ are  small enough, the performances gap induced by $\bm{\pi}^*_{CF}$ can be small, but that induced by $\bm{\pi}^*_{MA}$  expands when $n$ increases, making $\bm{\pi}^*_{MA}$ far away from the optimal. Upon examining the performance gap, we ultimately compare the safe policy improvement guarantees of the two methods on $M$:
\begin{theorem}\label{thm:cfbetter}
Assume $\forall s,a,i, \beta^i(a|s)\ge\epsilon$. 
The policy $\bm{\pi}^*_{MA}(\bm{a}|s)$ is a $\zeta^{MA}$-safe policy improvement over $\bm{\beta}$ in the actual MDP $M$, i.e., $J(\bm{\pi}^*_{MA},M)\ge J(\bm{\beta},M)-\zeta^{MA}$. And the policy $\bm{\pi}^*_{CF}(\bm{a}|s)$ is a $\zeta^{CF}$-safe policy improvement over $\bm{\beta}$ in $M$, i.e., $J(\bm{\pi}^*_{CF},M)\ge J(\bm{\beta},M)-\zeta^{CF}$. When $n\ge \log_{\frac{1}{\epsilon}}\bigg(\frac{1}{\epsilon}+\frac{2}{\alpha}\frac{\sqrt{|A|}}{\sqrt{|\mathcal{D}(s)|}}(C_{r, \delta}+\frac{\gamma R_{\max} C_{T, \delta}}{1-\gamma})\cdot
    (\frac{1}{\sqrt{\epsilon}}-1)\bigg)$, $\zeta^{CF}\le \zeta^{MA}.$
\end{theorem}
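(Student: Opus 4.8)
The plan is to treat Theorem~\ref{thm:cfbetter} as a head-to-head comparison of two instances of the single-agent CQL safe-policy-improvement bound (Theorem 3.6 of \citet{kumar2020conservative}), one fed with $D_{CQL}(\bm\pi,\bm\beta)$ and one with $D^{CF}_{CQL}(\bm\pi,\bm\beta)$, and to reduce the claim $\zeta^{CF}\le\zeta^{MA}$ to a single scalar inequality in $n$ that is then solved by taking $\log_{1/\epsilon}$. First I would write each gap in the standard form $\zeta=\frac{2}{1-\gamma}\big(C_{r,\delta}+\frac{\gamma R_{\max}C_{T,\delta}}{1-\gamma}\big)\mathbb{E}_{d^{\bm\pi^*}_{\hat M}}\big[\frac{\sqrt{|A|}}{\sqrt{|\mathcal{D}(s)|}}\sqrt{D(s)+1}\big]-\frac{\alpha}{1-\gamma}\mathbb{E}_{d^{\bm\pi^*}_{\hat M}}[D(s)]$, with $D=D_{CQL}$ for $\bm\pi^*_{MA}$ and $D=D^{CF}_{CQL}$ for $\bm\pi^*_{CF}$; this is the safe-improvement counterpart of the value lower bound already established in Theorem~\ref{thm:cflb} together with the regularized-objective characterization Eq.~\ref{eqn:pistar}.

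Second, I would use the coverage assumption $\beta^i(a|s)\ge\epsilon$ to turn the state-dependent divergences into closed-form envelopes. The per-agent worst case is $D_{CQL}(\pi^i,\beta^i)(s)\le \frac1\epsilon-1$, attained when $\pi^i$ concentrates on the least-covered action; feeding this into the two identities recorded in Theorem~\ref{thm:D2D}, namely $D_{CQL}(\bm\pi,\bm\beta)=\prod_i\big(D_{CQL}(\pi^i,\beta^i)+1\big)-1$ and $D^{CF}_{CQL}(\bm\pi,\bm\beta)=\sum_i\lambda_i D_{CQL}(\pi^i,\beta^i)$, yields $D_{CQL}\le \frac1{\epsilon^n}-1$ for MACQL but only $D^{CF}_{CQL}\le \frac1\epsilon-1$ for CFCQL (using $\sum_i\lambda_i=1$). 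Replacing $\mathbb{E}_{d^{\bm\pi^*}_{\hat M}}[\cdot]$ by its worst state and substituting these envelopes converts both gaps into explicit functions of $\epsilon$ and $n$: the sampling-error part scales like $\sqrt{1/\epsilon^{\,n}}$ for MACQL versus $\sqrt{1/\epsilon}$ for CFCQL, while the conservatism/improvement part contributes the difference $\frac{\alpha}{1-\gamma}\big(\frac1{\epsilon^n}-\frac1\epsilon\big)$.

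Third, I would form $\zeta^{MA}-\zeta^{CF}$, collect the common factor $\frac1{1-\gamma}$, and show its sign is governed by whether the improvement gap $\alpha\big(\frac1{\epsilon^n}-\frac1\epsilon\big)$ overtakes the residual sampling term $2\frac{\sqrt{|A|}}{\sqrt{|\mathcal{D}(s)|}}\big(C_{r,\delta}+\frac{\gamma R_{\max}C_{T,\delta}}{1-\gamma}\big)\big(\frac1{\sqrt\epsilon}-1\big)$. Rearranging into $\frac1{\epsilon^n}\ge\frac1\epsilon+\frac2\alpha\frac{\sqrt{|A|}}{\sqrt{|\mathcal{D}(s)|}}\big(C_{r,\delta}+\frac{\gamma R_{\max}C_{T,\delta}}{1-\gamma}\big)\big(\frac1{\sqrt\epsilon}-1\big)$ and taking $\log_{1/\epsilon}$ of both sides produces exactly the stated threshold on $n$.

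The step I expect to be the main obstacle is not the algebra of the final inequality but justifying the worst-case substitutions in a directionally valid way: the two bounds are evaluated at different learned policies $\bm\pi^*_{MA},\bm\pi^*_{CF}$ under different discounted state occupancies $d^{\bm\pi^*}_{\hat M}$, and the scalar map $D\mapsto c_1\sqrt{D+1}-c_2 D$ (with $c_1,c_2>0$) is non-monotone (increasing then decreasing). Care is therefore needed to upper-bound $\zeta^{CF}$ and lower-bound $\zeta^{MA}$ with the envelopes above — in particular to argue that the exponential blow-up of $D_{CQL}$ forces MACQL to pay the full $\frac1{\epsilon^n}$-scale penalty while CFCQL is capped at the $\frac1\epsilon$-scale — and to control the state-distribution expectations (via $\max_{s\in\mathcal D}$ and the $\epsilon$-coverage) so the comparison stays valid rather than collapsing to a vacuous bound. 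This is also where the observation in Theorem~\ref{thm:D2D} that the worst agent $j=\arg\max_k\mathbb{E}_{\pi^k}\frac{\pi^k}{\beta^k}$ dominates the product is used to make the MACQL side genuinely $\Theta(\epsilon^{-n})$.
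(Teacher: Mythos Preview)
Your high-level plan --- compare two CQL safe-improvement gaps and reduce to a scalar inequality in $n$ --- matches the paper, but the specific form you write for $\zeta$ in step~1 carries the wrong sign on the $\alpha$-term, and this is precisely what makes the ``main obstacle'' you flag in your last paragraph unavoidable on your route. With $\zeta = 2c\,\mathbb{E}[\sqrt{D+1}] - \frac{\alpha}{1-\gamma}\mathbb{E}[D]$, the map $D\mapsto 2c\sqrt{D+1}-\frac{\alpha}{1-\gamma}D$ is indeed non-monotone, and substituting the envelope $D_{CQL}\le \epsilon^{-n}-1$ would drive $\zeta^{MA}$ to $-\infty$, i.e., make MACQL look arbitrarily \emph{better} as $n$ grows --- the opposite of the claim. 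Your step~3 then silently reverses the sign to land on the stated threshold, but the chain from step~1 to step~3 is broken.

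The paper avoids this by not replacing the empirical-improvement term with $-\frac{\alpha}{1-\gamma}\mathbb{E}[D(\bm\pi^*_{\mathrm{method}},\bm\beta)]$ at the learned policy. Instead it invokes Theorem~\ref{thm:jj} to lower-bound $J(\bm\pi^*_{MA},\hat M)\ge J(\bm\pi^*,\hat M)-\frac{\alpha}{1-\gamma}(\epsilon^{-n}-1)$ (and $\epsilon^{-1}-1$ for CF), where $\bm\pi^*$ is the \emph{true} optimal policy. Plugging this into the CQL Theorem~3.6 bound yields
\[
\zeta^{MA}=2c\,\mathcal{B}(\bm\pi^*_{MA},D_{CQL})+\tfrac{\alpha}{1-\gamma}(\epsilon^{-n}-1)-\big(J(\bm\pi^*,\hat M)-J(\hat{\bm\beta},\hat M)\big),
\]
and the analogous expression for $\zeta^{CF}$; the last bracket is \emph{common} to both and cancels in $\zeta^{MA}-\zeta^{CF}$. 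Now the $\alpha$-contribution to the difference is $+\frac{\alpha}{1-\gamma}(\epsilon^{-n}-\epsilon^{-1})$, with the correct sign, and no non-monotonicity issue arises.

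The second mismatch is the handling of the $\mathcal B$-terms. You upper-bound both (getting $\epsilon^{-n/2}$ versus $\epsilon^{-1/2}$), but to lower-bound $\zeta^{MA}-\zeta^{CF}$ you need a \emph{lower} bound on $\mathcal B(\bm\pi^*_{MA},D_{CQL})$. The paper uses the trivial $D_{CQL}\ge 0\Rightarrow \mathcal B(\bm\pi^*_{MA},D_{CQL})\ge 1$, together with $D^{CF}_{CQL}\le \epsilon^{-1}-1\Rightarrow \mathcal B(\bm\pi^*_{CF},D^{CF}_{CQL})\le \epsilon^{-1/2}$, so that $\mathcal B(\bm\pi^*_{CF},D^{CF}_{CQL})-\mathcal B(\bm\pi^*_{MA},D_{CQL})\le \epsilon^{-1/2}-1$. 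That --- not a difference of two upper envelopes --- is where the factor $(\tfrac{1}{\sqrt\epsilon}-1)$ in the threshold actually comes from. With these two fixes in place, the rearrangement and the $\log_{1/\epsilon}$ step are exactly as you describe; Theorem~\ref{thm:D2D} is not used in this argument.
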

Detailed formation of $\zeta^{MA}$ and $\zeta^{CF}$ is provided in Appendix~A.5. Theorem~\ref{thm:cfbetter} shows that with a large enough agent count $n$, CFCQL has better safe policy improvement guarantee than MACQL. The validation experiment of this theoretical result is presented in Section~\ref{exp:equalline}. 

\subsection{Practical Algorithm}\label{sec:alg}
The fixed point of Eq.~\ref{eqn:cftd} provides an underestimated Q function for any policy $\mu$. But it is computationally inefficient to solve Eq.~\ref{eqn:cftd} every time after one step policy update. Similar to CQL~\citep{kumar2020conservative}, we also choose a $\bm{\mu}(\bm{a}|s)$ that would maximize the current $\hat{Q}$ with a $\bm{\mu}$-regularizer. If the regularizer aims to minimize the KL divergence between $\mu^i$ and a uniform distribution, $\mu^i(a^i|s)\propto \exp\left(\mathbb{E}_{\bm{a}^{-i}\sim\bm{\beta}^{-i}}Q(s,a^i,\bm{a^{-i}})\right)$, which results in the update rule of Eq.~\ref{eqn:prac} (See Appendix~B.1 for detailed derivation):
\begin{equation}\label{eqn:prac}
     \min_{Q} \alpha \mathbb{E}_{s\sim \mathcal{D}}\bigg[\sum^n_{i=1}\lambda_i\mathbb{E}_{\bm{a}^{-i}\sim\bm{\beta}^{-i}}[\log \sum_{a^i}\exp(Q(s,\boldsymbol{a}))]-
     \mathbb{E}_{\boldsymbol{a}\sim \boldsymbol{\beta}}[Q(s,\boldsymbol{a})]\bigg]+
    \hat{\mathcal{E}}_{\mathcal{D}}(\bm{\pi},Q).
\end{equation}

Finally, we need to specify each agent's weight of minimizing the policy Q function $\lambda_i$. Theoretically, any simplex of $\bm{\lambda}$ that satisfies  $\sum^n_{i=1}\lambda_i=1$ can be used to induce an underestimated value function linearly increasing as agents number as we expect. Therefore, a simple way is to set $\lambda_i=\frac{1}{n}, \forall i$ where each agent contributes penalty equally.
Another way is to prioritize penalizing the agent that exhibits the greatest deviation from the dataset, which is the one-hot style of $\bm{\lambda}$:
\begin{equation}\label{eqn:sl}
    \lambda_i(s)=\left\{
    \begin{array}{lc}1.0,& i=\arg\max_j\mathbb{E}_{\pi^j}\frac{\pi^j(s)}{\beta^j(s)}\\
    0.0,& others\\
    \end{array}
    \right.
\end{equation}
We assert that each agent's conservative contribution  deserves to be considered and differed according to their degree of deviation. As a result, both the uniform and the one-hot treatment present some limitations. 
Consequently, we employ a  moderate softmax variant of Eq.~\ref{eqn:sl}: 
\begin{equation}\label{eqn:finallambda}
    \forall i,s, \lambda_i(s)=\frac{\exp(\tau\mathbb{E}_{\pi^i}\frac{\pi^i(s)}{\beta^i(s)})}{\sum^n_{j=1}\exp(\tau\mathbb{E}_{\pi^j}\frac{\pi^j(s)}{\beta^j(s)})},
\end{equation}
where $\tau$ is a predefined temperature coefficient, controlling the influence of $\mathbb{E}_{\pi^i}\frac{\pi^i}{\beta^i}$ on $\lambda_i$.  When $\tau\rightarrow 0$, $\lambda_i\rightarrow \frac{1}{n}$, and when $\tau\rightarrow \infty$, it turns into Eq.~\ref{eqn:sl}. To compute Eq.~\ref{eqn:finallambda}, we need an explicit expression of $\pi^i$ and $\beta^i$. In discrete action space, $\pi^i$ can be estimated by $\exp\left(\mathbb{E}_{\bm{a}^{-i}\sim\bm{\beta}^{-i}}Q(s,a^i,\bm{a^{-i}})\right)$, and we use behavior cloning~\citep{michie1990cognitive} to train a parameterized $\bm{\beta}(s)$ from the dataset. In continuous action space, $\pi^i$ is parameterized by each agent's local policy. For $\beta^i$, we use the method of explicit estimation of behavior density in \citet{wu2022supported}, which is modified from a VAE~\citep{kingma2013auto} estimator. Details for computing $\bm{\lambda}$ are defered to Appendix~B.2. 
\begin{wrapfigure}{r}{0.6\textwidth}
\begin{minipage}{0.6\textwidth}
\begin{algorithm}[H]
    \caption{CFCQL-D and CFCQL-C} 
    \label{alg:dis}
\begin{algorithmic}[1]
\STATE Initialize $Q_{\theta}$, target network $Q_{\hat{\theta}}$, target update interval $t_{tar}$, replay buffer $\mathcal{D}$, and optionally $\bm{\pi}_{\psi}$ for CFCQL-C
 \FOR{$t=1,2,\ldots,t_{max}$ }
 \STATE Sample $N$ transitions $\{s,\bm{a},s',r\}$ from $\mathcal{D}$
 \STATE Compute $Q_{\theta}(s,\bm{a})$ using the structure of QMIX for CFCQL-D or MADDPG for CFCQL-C.
 \STATE Calculate $\bm{\lambda}$ according to Eq.~\ref{eqn:finallambda}
 \STATE Update $Q_{\theta}$ by Eq.~\ref{eqn:prac} with sampled transitions. Using  $\hat{\mathcal{T}}_{\hat{\theta}}$ for CFCQL-D, and $\hat{\mathcal{T}}^{\bm{\pi}_{\psi_t}}_{\hat{\theta}}$ for CFCQL-C
 \STATE (Only for CFCQL-C) For each agent $i$, take one-step policy improvement for $\pi^i_{\psi}$ according to Eq.~\ref{eqn:gdpi}
 \IF{$t \mod t_{tar} = 0$} 
 \STATE update target network $\hat{\theta}\leftarrow \theta$
 \ENDIF
 \ENDFOR
\end{algorithmic}
\end{algorithm}
\end{minipage}
\end{wrapfigure}

For policy improvement in continuous action space, we also take derivation of a counterfactual Q function for each agent, rather than updating all agents' policy together like in MADDPG. Specifically, the gradient of each agent $i$'s policy $\pi^i$ is calculated by:
\begin{equation}\label{eqn:gdpi}
    \nabla_{a^i}\mathbb{E}_{s,\bm{a}^{-i}\sim \mathcal{D},a^i\sim \pi^i (s)} Q_{\theta}(s,\bm{a})
\end{equation}
The reason is that in CFCQL, we only minimize $Q(s,\pi^i,\bm{\beta^{-i}})$, rather than $Q(s,\bm{\pi})$. Using the untrained $Q(s,\bm{\pi})$ to directly guide PI like MADDPG may result in a bad policy.

We summarize CFCQL in discrete and continuous action space in Algorithm~\ref{alg:dis} as CFCQL-D and -C, separately.

\section{Experiments}
\textbf{Baselines.} We compare our method CFCQL with several offline Multi-Agent baselines, where baselines with prefix $MA$ adopt CTDE paradigm and the others adopt independent learning paradigm:
\textbf{BC}: Behavior cloning. \textbf{TD3-BC}\citep{fujimoto2021minimalist}: One of the state-of-the-art single agent offline algorithm, simply adding the BC term to TD3~\citep{fujimoto2018addressing}.
\textbf{MACQL}: Naive extension of conservative Q-learning, as proposed in Sec.\ref{subsec:macql} .
\textbf{MAICQ}\citep{yang2021believe}: Multi-agent version of implicit constraint Q-learning by decomposed multi-agent joint-policy under implicit constraint.
\textbf{OMAR}\citep{pan2022plan}: Using zeroth-order optimization for better coordination among agents' policies, based on independent CQL~(\textbf{ICQL}).
\textbf{MADTKD}\citep{tsengoffline}: Using decision transformer to represent each agent’s policy, trained with knowledge distillation. \textbf{IQL}\citep{kostrikov2021offline} and \textbf{AWAC}\citep{nair2020awac}: variants of advantage weighted behaviour cloning, which are SOTA on single agent offline RL.
Details for baseline implementations are in Appendix~C.3. 

\begin{wrapfigure}{r}{0.5\textwidth}
\begin{center}
\includegraphics[width=\linewidth]{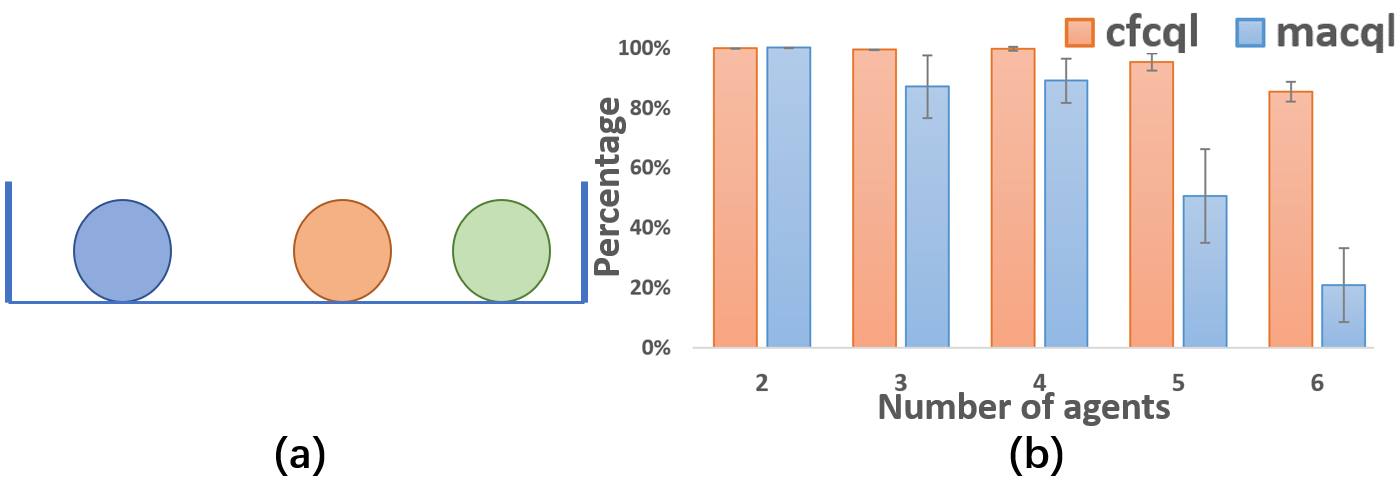}
\end{center}
\caption{(a) The $Equal\_Line$ environment where n=3. (b) Performance ratio of CFCQL and MACQL to the behaviour policy with a varing number of agents.}
\label{img:equal_line}
\vspace{-20pt}
\end{wrapfigure}
Each algorithm is run for five random seeds, and we report the mean performance with standard deviation\footnote{Our code and datasets are available at: \url{https://github.com/thu-rllab/CFCQL}}. 
Four environments are adopted to evaluate our method, including both discrete action space and continuous action space. We have relocated additional experimental results to Appendix D to conserve space.
\subsection{Equal Line}\label{exp:equalline}
To empirically compare CFCQL and MACQL as agents number $n$ increases, we design a multi-agent task called $Equal\_Line$, which is a  one-dimensional simplified version of $Equal\_Space$ introduced in \citet{tsengoffline}. Details of the environment are in Appendix~C.1.
The $n$ agents need to cooperate to disperse and ensure every agent is equally spaced. The datasets consist of $1000$ trajectories sampled by executing a fully-pretrained policy of QMIX\citep{rashid2018qmix}, i.e. $Expert$ dataset. We plot the performance ratios of CFCQL and MACQL to the behavior policy for different agent number $n$ in Fig.\ref{img:equal_line}(b). It can be observed that the performance of MACQL degrades dramatically as the increase of number of agents while the performance of CFCQL remains basically stable. The results strongly verify the conclusion we proposed in Sec.\ref{sec:cfcql}, that CFCQL has better policy improvement guarantee than MACQL with a large enough number of agents $n$.
\subsection{Multi-agent Particle Environment}\label{exp:mpe}
\begin{wraptable}{r}{9.5cm}
  \centering
  \caption{Results on Multi-agent Particle Environment. CN: Cooperative Navigation. PP:Predator-prey. World: World.}
    \tablestyle{4pt}{1.0}
    \begin{tabular}{cccccc}
    \toprule
    \textbf{Env} & \textbf{Dataset} & \textbf{OMAR} & \textbf{MACQL} & \textbf{IQL}& \textbf{CFCQL} \\
    \midrule
    \multirow{4}[2]{*}{\textbf{CN}} & Random & 34.4±5.3 & 45.6±8.7 &	5.5±1.1& \textbf{62.2±8.1} \\
          & Med-Rep & 37.9±12.3 & 25.5±5.9 &10.8±4.5& \textbf{52.2±9.6} \\
          & Medium & 47.9±18.9 & 14.3±20.2 & 28.2±3.9& \textbf{65.0±10.2} \\
          & Expert & \textbf{114.9±2.6} & 12.2±31 &103.7±2.5& 112±4 \\
    \midrule
    \multirow{4}[2]{*}{\textbf{PP}} & Random & 11.1±2.8 & 25.2±11.5 &1.3±1.6& \textbf{78.5±15.6} \\
          & Med-Rep & 47.1±15.3 & 11.9±9.2 &23.2±12& \textbf{71.1±6} \\
          & Medium & 66.7±23.2 & 55±43.2 &53.6±19.9& \textbf{68.5±21.8} \\
          & Expert & 116.2±19.8 & 108.4±21.5 &109.3±10.1& \textbf{118.2±13.1} \\
    \midrule
    \multirow{4}[2]{*}{\textbf{World}} & Random & 5.9±5.2 & 11.7±11 &2.9±4.0& \textbf{68±20.8} \\
          & Med-Rep & 42.9±19.5 & 13.2±16.2 &41.5±9.5& \textbf{73.4±23.2} \\
          & Medium & 74.6±11.5 & 67.4±48.4 &70.5±15.3	& \textbf{93.8±31.8} \\
          & Expert & 110.4±25.7 & 99.7±31 &107.8±17.7	& \textbf{119.7±26.4} \\
    \bottomrule
    \end{tabular}
  \label{tab:mpe}%
  \vspace{-10pt}
\end{wraptable}%
In this section, we test CFCQL on Multi-agent Particle Environemnt with continuous action space. We use the dataset and the adversary agent provided by \citet{pan2022plan}. The performance of the trained model is measured by the normalized score $100\times (S-S_{Random})/(S-S_{Expert})$~\citep{fu2020d4rl}. 

In Table~\ref{tab:mpe}, we only show the comparison of our method and the current state-of-the-art method OMAR and IQL to save space. For complete comparison with more baselines, e.g., TD3+BC and AWAC, please refer to Appendx~D.1. It can be seen that on 11 of the 12 datasets, CFCQL shows superior performance than current state-of-the-art. Note that we only report the results of $\tau=0$. In Appendix~D.2 of the ablation on $\tau$, we show that with carefully fine-tuned $\tau$, higher scores of CFCQL can be obtained. We also carry out ablations of $\alpha$ in Appendix~D.3.
\subsection{Multi-agent MuJoCo}\label{exp:mujoco}
\begin{wraptable}{r}{8cm}
  \centering
  \caption{Results on MaMuJoCo.}
    \tablestyle{4pt}{1.0}
    \begin{tabular}{ccccc}
    \toprule
    Dataset & Random & Med-rep & Medium & Expert \\
    \midrule
    \textbf{ICQ} & 7.4±0.0 & 35.6±2.7 & 73.6±5.0 & 110.6±3.3 \\
    \textbf{TD3+BC} & 7.4±0.0 & 27.1±5.5 & 75.5±3.7 & 114.4±3.8 \\
    \textbf{ICQL} & 7.4±0.0 & 41.2±10.1 & 50.4±10.8 & 64.2±24.9 \\
    \textbf{OMAR} & 13.5±7.0 & 57.7±5.1 & 80.4±10.2& 113.5±4.3 \\
    \textbf{MACQL} & 5.3±0.5 & 37.0±7.1 & 51.5±26.7 & 50.1±20.1 \\
    \textbf{IQL} & 7.4±0.0& 58.8±6.8& \textbf{81.3±3.7}& 115.6±4.2\\
    \textbf{AWAC}&  7.3±0.0& 30.9±1.6&71.2±4.2&113.3±4.1\\
    \textbf{CFCQL} & \textbf{39.7±4.0} & \textbf{59.5±8.2} & 80.5±9.6 & \textbf{118.5±4.9} \\
    \bottomrule
    \end{tabular}
  \label{tab:mujoco}%
\end{wraptable}%
In this section we investigate the effect of our method on more complex continuous control task. We use the HalfCheetah-v2 setting from the multi-agent MuJoCo environment~\citep{peng2021facmac} and the datasets provided by \citet{pan2022plan}. Table~\ref{tab:mujoco} shows that CFCQL exceeds the current state-of-the-art on most datasets.

Except for the counterfactual Q function, we also analyze whether the counterfactual treatment in CFCQL can be incorporated  in other components and help further improvement in Appendix~D.4. We find that the counterfactual policy improvement, i.e., the policy improvement by Eq.~\ref{eqn:gdpi} rather than using MADDPG's PI, is critical for our method.

\subsection{StarCraft II}\label{exp:sc2}
\begin{table*}[ht]
  \centering
  \caption{Averaged test winning rate of CFCQL and baselines in StarCraft II micromanagement tasks.}
  \resizebox{\textwidth}{!}{
    \begin{tabular}{cccccccccc}
    \toprule
    \textbf{Map} & \textbf{Dataset} & \textbf{CFCQL} & \textbf{MACQL} & \textbf{MAICQ} & \textbf{OMAR} & \textbf{MADTKD} & \textbf{BC} & \textbf{IQL} & \textbf{AWAC} \\
    \midrule
    \multirow{4}[1]{*}{\textbf{2s3z}} & medium & \textbf{0.40±0.10} & 0.17±0.08 & 0.18±0.02 & 0.15±0.04 & 0.18±0.03 & 0.16±0.07 & 0.16±0.04 & 0.19±0.05 \\
          & medium\_replay & \textbf{0.55±0.07} & 0.12±0.08 & 0.41±0.06 & 0.24±0.09 & 0.36±0.07 & 0.33±0.04 & 0.33±0.06 & 0.39±0.05 \\
          & expert & \textbf{0.99±0.01} & 0.58±0.34 & 0.93±0.04 & 0.95±0.04 & \textbf{0.99±0.02} & 0.97±0.02 & 0.98±0.03 & 0.97±0.03 \\
          & mixed & 0.84±0.09 & 0.67±0.17 & \textbf{0.85±0.07} & 0.60±0.04 & 0.47±0.08 & 0.44±0.06 & 0.19±0.04 & 0.14±0.04 \\
    \midrule
    \multirow{4}[1]{*}{\textbf{3s\_vs\_5z}} & medium & \textbf{0.28±0.03} & 0.09±0.06 & 0.03±0.01 & 0.00±0.00   & 0.01±0.01 & 0.08±0.02 & 0.20±0.05 & 0.19±0.03 \\
          & medium\_replay & \textbf{0.12±0.04} & 0.01±0.01 & 0.01±0.02 & 0.00±0.00   & 0.01±0.01 & 0.01±0.01 & 0.04±0.04 & 0.08±0.05 \\
          & expert & \textbf{0.99±0.01} & 0.92±0.05 & 0.91±0.04 & 0.64±0.08 & 0.67±0.08 & 0.98±0.02 & \textbf{0.99±0.01} & \textbf{0.99±0.02} \\
          & mixed & \textbf{0.60±0.14} & 0.17±0.10 & 0.10±0.04 & 0.00±0.00   & 0.14±0.08 & 0.21±0.04 & 0.20±0.06 & 0.18±0.03 \\
    \midrule
    \multirow{4}[2]{*}{\textbf{5m\_vs\_6m}} & medium & \textbf{0.29±0.05} & 0.01±0.01 & 0.26±0.03 & 0.19±0.06 & 0.21±0.04 & 0.28±0.37 & 0.25±0.02 & 0.22±0.04 \\
          & medium\_replay & \textbf{0.22±0.06} & 0.16±0.08 & 0.18±0.04 & 0.03±0.02 & 0.16±0.04 & 0.18±0.06 & 0.18±0.04 & 0.18±0.04 \\
          & expert & \textbf{0.84±0.03} & 0.01±0.01 & 0.72±0.05 & 0.33±0.06 & 0.58±0.07 & 0.82±0.04 & 0.77±0.03 & 0.75±0.02 \\
          & mixed & 0.76±0.07 & 0.01±0.01 & 0.67±0.08 & 0.10±0.10 & 0.21±0.05 & 0.21±0.12 & 0.76±0.06 & \textbf{0.78±0.02} \\
    \midrule
    \multirow{4}[1]{*}{\textbf{6h\_vs\_8z}} & medium & 0.41±0.04 & 0.01±0.01 & 0.19±0.04 & 0.04±0.03 & 0.22±0.07 & 0.40±0.03 & 0.40±0.05 & \textbf{0.43±0.06} \\
          & medium\_replay & \textbf{0.21±0.05} & 0.08±0.04 & 0.07±0.04 & 0.00±0.00   & 0.12±0.05 & 0.11±0.04 & 0.17±0.03 & 0.14±0.04 \\
          & expert & \textbf{0.7±0.06} & 0.00±0.00   & 0.24±0.08 & 0.01±0.01 & 0.48±0.08 & 0.60±0.04 & 0.67±0.03 & 0.67±0.03 \\
          & mixed & \textbf{0.49±0.08} & 0.01±0.01 & 0.05±0.03 & 0.00±0.00   & 0.25±0.07 & 0.27±0.06 & 0.36±0.05 & 0.35±0.06 \\
    \bottomrule
    \end{tabular}%
    }
  \label{tab:sc2}%
\end{table*}%

\begin{wrapfigure}{r}{0.6\textwidth}
\begin{center}
\includegraphics[width=\linewidth]{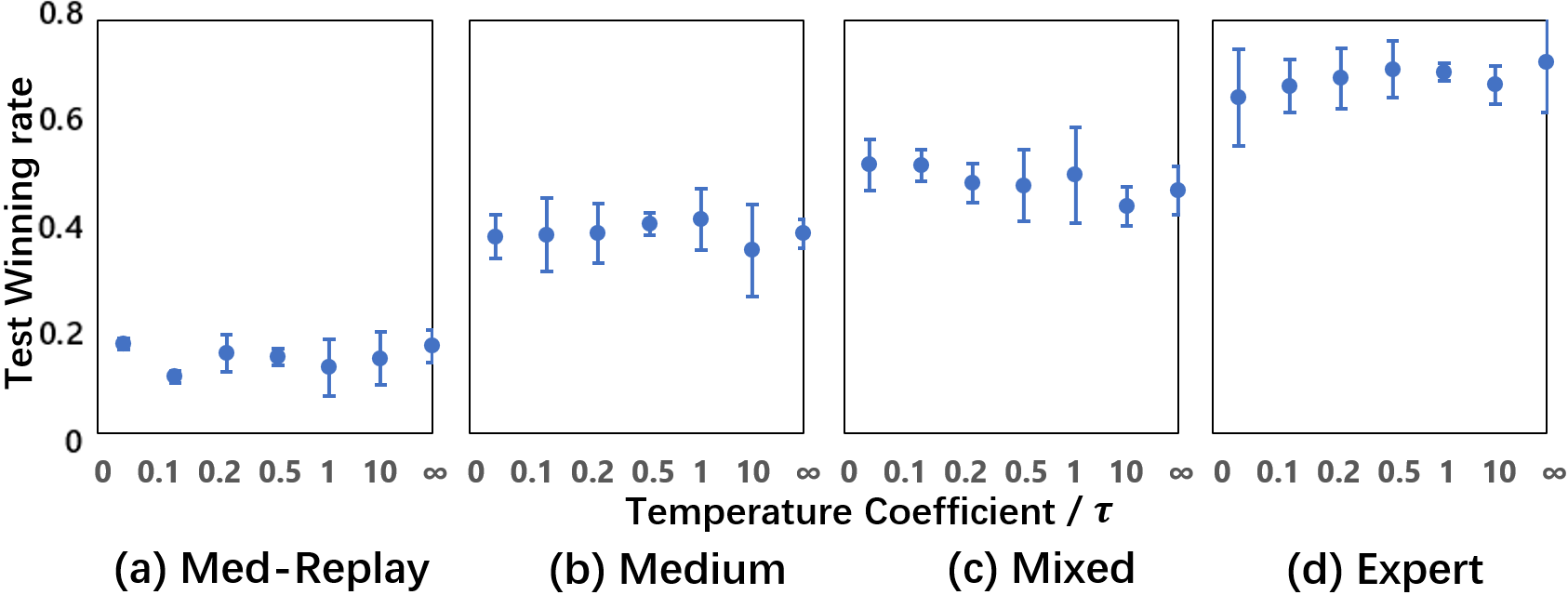}
\end{center}
\caption{Hyperparameters examination on the temporature coefficient in different types of datasets.}
\vspace{-5pt}
\label{img:tau_ablation}
\end{wrapfigure}

We further validate CFCQL's universality through complex experiments on the StarCraft II Micromanagement Benchmark~\citep{samvelyan2019starcraft}, encompassing four maps with varying agent counts and difficulties: 2s3z, 3s\_vs\_5z, 5m\_vs\_6m, and 6h\_vs\_8z. As no pre-existing datasets exist for these tasks, we generate them ourselves, with dataset creation details provided in the Appendix~C.2.  

Table \ref{tab:sc2} presents the average test winning rates of various algorithms on different datasets. MACQL's performance depends on agent count and environmental difficulty, only succeeding in 2s3z and 3s\_vs\_5z. MAICQ and OMAR perform well across many datasets but can not tackle all tasks and significantly  struggle in 6h\_vs\_8z, a highly challenging map. MADTKD, employing supervised learning and knowledge distillation, works well but seldom surpasses BC. IQL and AWAC are competitive baselines but they still fall short compared to CFCQL in most datasets. CFCQL significantly outperforms all baselines on most datasets, achieving state-of-the-art results, with its success attributed to moderate and appropriate conservatism compared to MACQL and other baselines.

\textbf{Temperature Coefficient.} To study the effect of temperature coefficient $\tau$, the key hyperparameter in computing the $\lambda_i$, Fig.\ref{img:tau_ablation}(a) shows the testing winning rate of CFCQL with different $\tau$s on each kind of dataset of map 6h\_vs\_8z. 
As shown, CFCQL is not sensitive to this hyperparameter and we find that the best value of $\tau$ is usually greater than 0 while lower than infinity, showing that moderate imbalance can be more effective as expected in previous section.

\section{Discussion and Conclusion}
\subsection{Broader Impacts}
Our proposed method holds potential for application in real-world multi-agent systems, such as intelligent warehouse management or medical treatment. However, directly implementing the derived policy might entail risks due to the domain gap between the training virtual datasets and real-world scenarios. To mitigate potential hazards, it is crucial for practitioners to operate the policy under human supervision, ensuring that undesirable outcomes are avoided by limiting the available options.
\subsection{Limitations}
Here we discuss some limitations about CFCQL. In the case of discrete action space, since CFCQL uses QMIX as the backbone, it inherits the Individual-global-max principle~\citep{son2019qtran}, which means it cannot solve tasks that are not factorizable. On continuous action space, the counterfactual policy update used in CFCQL allows for updating only one agent's policy for each sample, which may lead to lower convergence speed compared to methods with independent learning.
\subsection{Conclusion}
In this paper, we study the offline MARL problem which is practical and  challenging but lack of enough attention. We demonstrate from theories and experiments that naively extending conservative offline RL algorithm to multi-agent setting leads to over-pessimism as the exponentially explosion of action space with the increasing number of agents which hurts the performance. To address this issue, we propose a counterfactual way to make conservative value estimation while maintaining the CTDE paradigm. It is theoretically proved that the proposed CFCQL enjoys performance guarantees independent of number of agents, which can avoid overpessimism caused by the exponentially growth of joint policy space.  The results of experiments with discrete or continuous action space show that our method achieve superior performance to the state-of-the-art. 
Some ablation study is also made to propose further understanding of CFCQL. 
The idea of counterfactual treatment may also be incorporated with other offline RL and multi-agent RL algorithms, which deserves further investigation both theoretically and empirically. 
\bibliographystyle{icml2022}
\bibliography{reference}

\newpage
\appendix
\setcounter{equation}{0}
\renewcommand{\theequation}{\thesection.\arabic{equation}}
\setcounter{theorem}{0}
\renewcommand{\thetheorem}{\thesection.\arabic{theorem}}
\section{Detailed Proof}\label{app:proof}
\subsection{Proof of Theorem~4.1}

\begin{proof}
    Similar to the proof of Theorem 3.2 in \citet{kumar2020conservative}, we first prove this theorem in the absence of sampling error, and then incorporate sampling error at the end. By set the derivation of the objective in Eq.~4 to zero, we can compute the Q-function update induced in the exact, tabular setting($\mathcal{T}^{\bm{\pi}}=\hat{\mathcal{T}}^{\bm{\pi}}$ and $\bm{\pi_\beta}(\textbf{a}|s)=\bm{\hat{\pi}_\beta}(\textbf{a}|s)$).
    \begin{equation}
    \forall \ s, \bm{a}, k,\ \hat{Q}^{k+1}(s,\bm{a}) =\mathcal{T}^{\bm{\pi}}\hat{Q}^{k}(s,\bm{a})-
    \alpha\left[\Sigma_{i=1}^{n}\lambda_i\frac{\mu^i}{\pi_\beta^i}-1\right]
    \end{equation}
    Then, the value of the policy, $\hat{V}^{k+1}$ can be proved to be underestimated, since:
\begin{equation}
    \hat{V}^{k+1}(s)=\mathbb{E}_{\bm{a}\sim\bm{\pi}(\bm{a}|s)}\left[\hat{Q}^{\bm{\pi}}(s,\bm{a})\right] =\mathcal{T}^{\bm{\pi}}\hat{V}^{k}(s)-
    \alpha \mathbb{E}_{\bm{a}\sim\bm{\pi}(\bm{a}|s)} \left[\Sigma_{i=1}^{n}\lambda_i\frac{\mu^i}{\pi_\beta^i}-1\right]
    \label{eqn:cfcqlviter}
\end{equation}
Next, we will show that $D^{CF}_{CQL}(s) =\Sigma_{a}\bm{\pi}(\bm{a}|s)\left[ \Sigma_{i=1}^{n}\lambda_i \frac{\mu^i(a^i|s)}{\hat{\pi}_\beta^i(a^i|s)}-1\right]$ is always positive, when $\mu^i(a^i|s)=\pi^i(a^i|s)$:
\begin{align}
    D^{CF}_{CQL}(s) &=\Sigma_{a}\bm{\pi}(\bm{a}|s)\left[ \Sigma_{i=1}^{n}\lambda_i \frac{\mu^i(a^i|s)}{\pi_\beta^i(a^i|s)}-1\right]\\
    &=\Sigma_{i=1}^{n}\lambda_i\left[ \Sigma_{a^i}\pi^i(a^i|s)\left[\frac{\mu^i(a^i|s)}{\pi_\beta^i(a^i|s)}-1\right]\right]\\
    &=\Sigma_{i=1}^{n}\lambda_i\left[ \Sigma_{a^i}(\pi^i(a^i|s)-\pi_\beta^i(a^i|s)+\pi_\beta^i(a^i|s))\left[\frac{\mu^i(a^i|s)}{\pi_\beta^i(a^i|s)}-1\right]\right]\\
    &=\Sigma_{i=1}^{n}\lambda_i\left[ \Sigma_{a^i}(\pi^i(a^i|s)-\pi_\beta^i(a^i|s))\left[\frac{\pi^i(a^i|s)-\pi_\beta^i(a^i|s)}{\pi_\beta^i(a^i|s)}\right]+\Sigma_{a^i}\pi_\beta^i(a^i|s))\left[\frac{\mu^i(a^i|s)}{\pi_\beta^i(a^i|s)}-1\right]\right]\\
    &=\Sigma_{i=1}^{n}\lambda_i\left[ \Sigma_{a^i}\left[\frac{(\pi^i(a^i|s)-\pi_\beta^i(a^i|s))^2}{\pi_\beta^i(a^i|s)}\right]+0\right] \ since, \forall i, \Sigma_{a^i}\pi^i(a^i|s)=\Sigma_{a^i}\pi^i_{\beta}(a^i|s)=1\\
    &\geq 0
\end{align}

As shown above, the $D^{CF}_{CQL}(s)\geq0$, and $D^{CF}_{CQL}(s)=0$, iff $\pi^i(a^i|s)=\pi^i_\beta(a^i|s)$. This implies that each value iterate incurs some underestimation, i.e. $\hat{V}^{k+1}(s)\leq\mathcal{T}^{\bm{\pi}}\hat{V}^{k}(s)$.

We can compute the fixed point of the recursion in Equation \ref{eqn:cfcqlviter} and get the following estimated policy value:

\begin{equation}
    \hat{V}^{\bm{\pi}}(s) = V^{\bm{\pi}}(s) - \alpha\left[(I-\gamma P^{\bm{\pi}})^{-1}
    \Sigma_{a}\bm{\pi}(\bm{a}|s)
    \left[ \Sigma_{i=1}^{n}\lambda_i \frac{\mu^i(a^i|s)}{\hat{\pi}_\beta^i(a^i|s)}-1\right]\right](s)
\end{equation}

Because the $(I-\gamma P^{\bm{\pi}})^{-1}$ is non negative and the $D^{CF}_{CQL}(s)\geq0$, it's easily to prove that  in the absence of sampling error, Theorem~4.1 gives a lower bound.

\textbf{Incorporating sampling error}. According to the conclusion in \citet{kumar2020conservative}, we can directly write down the result with sampling error as follows:

\begin{equation} \label{sampling_error}
    \hat{V}^{\bm{\pi}}(s) \leq V^{\bm{\pi}}(s) - \alpha\left[(I-\gamma P^{\bm{\pi}})^{-1}
    \Sigma_{a}\bm{\pi}(\bm{a}|s)
    \left[ \Sigma_{i=1}^{n}\lambda_i \frac{\mu^i(a^i|s)}{\hat{\pi}_\beta^i(a^i|s)}-1\right]\right](s)
    +\left[(I-\gamma P^{\bm{\pi}})^{-1}\frac{C_{r,T,\sigma}R_{max}}{(1-\gamma)\sqrt{|D|}}\right]
\end{equation}

So, the statement of Theorem~4.1 with sampling error is proved. Please refer to the Sec.D.3 in \citet{kumar2020conservative} For detailed proof. Besides, the choice of $\alpha$ in this case to prevent overestimation is given by:

\begin{equation}
\alpha\geq \max_{s,\bm{a}\in D}\frac{C_{r,T,\sigma}R_{max}}{(1-\gamma)\sqrt{|D|}}\cdot\max_{s\in D}\left[\Sigma_{a}\bm{\pi}(\bm{a}|s)
    \left[ \Sigma_{i=1}^{n}\lambda_i \frac{\mu^i(a^i|s)}{\hat{\pi}_\beta^i(a^i|s)}-1\right]\right]^{-1}
\end{equation}

\end{proof}

\subsection{Proof of Theorem~4.2}

\begin{proof}
According to the definition, we can get the formulation of $D_{CQL}^{CF}(\bm{\pi},\bm{\beta})(s)$ and $D_{CQL}(\bm{\pi},\bm{\beta})(s)$ as follow:
    \begin{align}
    D_{CQL}^{CF}(\bm{\pi},\bm{\beta})(s) &= \mathbb{E}_{\boldsymbol{a}\sim \boldsymbol{\pi}(\cdot|s)}\left(\left[\sum^n_{i=1}\lambda_i\frac{\pi^i(a^i|s)}{\beta^i(a^i|s)}\right]-1\right)\\
    &= \sum^n_{i=1}\lambda_i
    \left(\sum_{a^i}\frac{\pi^i(a^i|s)*\pi^i(a^i|s)}{\beta^i(a^i|s)}\right)-1\ge 0
\end{align}
\begin{align}
    D_{CQL}(\bm{\pi},\bm{\beta})(s) &= \mathbb{E}_{\boldsymbol{a}\sim \boldsymbol{\pi}(\cdot|s)}\left(\left[\frac{\boldsymbol{\pi}(\boldsymbol{a}|s)}{\boldsymbol{\beta}(\boldsymbol{a}|s)}\right]-1\right)\\
    &= \prod^n_{i=1}
    \left(\sum_{a^i}\frac{\pi^i(a^i|s)*\pi^i(a^i|s)}{\beta^i(a^i|s)}\right)-1
    \ge 0
\end{align}

Then, by taking the logarithm of $D_{CQL}(\bm{\pi},\bm{\beta})(s)$, we get:
\begin{equation}
    \ln(D_{CQL}(\bm{\pi},\bm{\beta})(s)+1) 
    = \sum^n_{i=1}\ln
    \left(\mathbb{E}_{a^i\sim \pi^i(\cdot|s)}\frac{\pi^i(a^i|s)}{\beta^i(a^i|s)}\right)\\
    \label{appeqn:lndcql}
\end{equation}

As $\Sigma_i\lambda_i=1$, it's obvious that

\begin{equation}
\ln(D_{CQL}^{CF}(\bm{\pi},\bm{\beta})(s)+1)\leq 
    \ln\left(\sum_{a^j}\frac{\pi^j(a^j|s)*\pi^j(a^j|s)}{\beta^j(a^j|s)}\right), where \ j=\arg\max_k\mathbb{E}_{\pi^k}\frac{\pi^k}{\beta^k}
    \label{appeqn:lndcfcql}
\end{equation}

By combining equation \ref{appeqn:lndcql} and inequation \ref{appeqn:lndcfcql}, we get
\begin{align}
    \frac{D_{CQL}(\bm{\pi},\bm{\beta})(s)+1}{D^{CF}_{CQL}(\bm{\pi},\bm{\beta})(s)+1}&\ge \exp\left(\sum^n_{i=1,i\ne j}\ln\left(\mathbb{E}_{a^i\sim \pi^i(\cdot|s)}\frac{\pi^i(a^i|s)}{\beta^i(a^i|s)}\right) \right)\\
    &\ge \exp\left(\sum^n_{i=1,i\ne j} KL(\pi^i(s)||\beta^i(s))\right), where \ j=\arg\max_k\mathbb{E}_{\pi^k}\frac{\pi^k}{\beta^k}
\end{align}
the second inequality is derived from the Jensen's inequality. As the Kullback-Leibler Divergence is non-negative, it's obvious that $D_{CQL}(\bm{\pi},\bm{\beta})(s)\ge D^{CF}_{CQL}(\bm{\pi},\bm{\beta})(s)$, then we can simplify the left-hand side of this inequality:
\begin{equation}
    \frac{D_{CQL}(\bm{\pi},\bm{\beta})(s)}{D^{CF}_{CQL}(\bm{\pi},\bm{\beta})(s)}\ge \exp\left(\sum^n_{i=1,i\ne j} KL(\pi^i(s)||\beta^i(s))\right), where \ j=\mathop{\arg\max}_k\mathbb{E}_{\pi^k}\frac{\pi^k}{\beta^k}
\end{equation}

\end{proof}

\subsection{Proof of Equation~6}\label{subsec:eq6}
\begin{proof}
Similar to the proof of Lemma D.3.1 in CQL~\citep{kumar2020conservative}, ${Q}$ is obtained by solving a recursive Bellman fixed point equation in the empirical MDP $\hat{M}$, with an altered reward, $r(s,a)-\alpha\left[\sum_{i}\lambda_i\frac{\pi^i(a^i|s)}{\beta^i(a^i|s)}-1\right]$, hence the optimal policy $\bm{\pi}^*(\bm{a}|s)$ obtained by optimizing the value under the CFCQL Q-function equivalently is characterized via Eq.~6.
\end{proof}
\subsection{Proof of Theorem~4.3}
\begin{proof}
    Similar to Eq.~6, $\bm{\pi}^*_{MA}$ is equivalently obtained by solving:
\begin{equation}
     \bm{\pi}^*_{MA}(\bm{a}|s)\leftarrow \arg\max_{\bm{\pi}} J(\bm{\pi},\hat{M})-\alpha\frac{1}{1-\gamma}\mathbb{E}_{s\sim d^{\bm{\pi}}_{\hat{M}}(s)}[D_{CQL}(\bm{\pi},\bm{\beta})(s)].
     \label{eqn:mapistar}
\end{equation}
Recall that $\forall s,\bm{\pi},\bm{\beta}, D_{CQL}(\bm{\pi}, \bm{\beta})(s)\ge 0$. We have
\begin{equation}\label{eqn:jjcql}
\begin{aligned}
J(\bm{\pi}^*_{MA},\hat{M})\ge & J(\bm{\pi}^*_{MA},\hat{M})-\alpha\frac{1}{1-\gamma}\mathbb{E}_{s\sim d^{\bm{\pi}^*_{MA}}_{\hat{M}}(s)}[D_{CQL}(\bm{\pi}^*_{MA},\bm{\beta})(s)]\\
\ge &J(\bm{\pi}^*,\hat{M})-\alpha\frac{1}{1-\gamma}\mathbb{E}_{s\sim d^{\bm{\pi}^*}_{\hat{M}}(s)}[D_{CQL}(\bm{\pi}^*,\bm{\beta})(s)].
\end{aligned}
\end{equation}
Then we give an upper bound of $\mathbb{E}_{s\sim d^{\bm{\pi}^*}_{\hat{M}}(s)}[D_{CQL}(\bm{\pi}^*,\bm{\beta})(s)]$.
Due to the assumption that $\beta^i$ is greater than $\epsilon$ anywhere, we have
\begin{equation}\label{eqn:updcql}
\begin{aligned}
D_{CQL}(\bm{\pi}, \bm{\beta})(s)=&\sum_{\bm{a}}\bm{\pi}(\bm{a}|s)[\frac{\bm{\pi}(\bm{a}|s)}{\bm{\beta}(\bm{a}|s)}-1]=\sum_{\bm{a}}\bm{\pi}(\bm{a}|s)[\frac{\bm{\pi}(\bm{a}|s)}{\prod_{i=1}^n \beta^i(a^i|s)}-1]\\
\le&\left(\frac{1}{\epsilon^n}\sum_{\bm{a}}\bm{\pi}(\bm{a}|s)[\bm{\pi}(\bm{a}|s)]\right)-1\le\frac{1}{\epsilon^n}-1.
\end{aligned}
\end{equation}
Combining Eq.~\ref{eqn:jjcql} and Eq.~\ref{eqn:updcql}, we can get
\begin{equation}\label{eqn:boundjpima}
    J(\bm{\pi}^*_{MA},\hat{M})\ge J(\bm{\pi}^*,\hat{M})-\frac{\alpha}{1-\gamma}(\frac{1}{\epsilon^n}-1)
\end{equation}
Recall the sampling error proved in  \cite{kumar2020conservative} and referred to above in (\ref{sampling_error}),  we can use it to bound the  performance difference for any $\bm{\pi}$ on true and empirical MDP by 
\begin{align}
|J(\bm{\pi},M) - J(\bm{\pi}_,\hat{M})| \leq
\frac{C_{r,T,\delta}R_{max}}{(1-\gamma)^2} \sum_{s}\frac{\rho(s)}{\sqrt{|D(s)|}},
\end{align}
then let $sampling~ error := 2\cdot \frac{C_{r,T,\sigma}R_{max}}{(1-\gamma)^2} \sum_{s}\frac{\rho(s)}{\sqrt{|D(s)|}}$, 
and incorporate it into (\ref{eqn:boundjpima}) , we get
\begin{equation}\label{eqn:boundjpima_new}
    J(\bm{\pi}^*_{MA},M)\ge J(\bm{\pi}^*,M)-\frac{\alpha}{1-\gamma}(\frac{1}{\epsilon^n}-1)-\text{sampling~error}
\end{equation}
where  $sampling~error$ is a constant dependent on the MDP itself and D.
Note that during the proof we do not take advantage of the nature of $\bm{\pi}^*$. Actually $\bm{\pi}^*$ can be replaced by any policy $\bm{\pi}$. The reason we use $\bm{\pi}^*$ is that it can give that largest lower bound, resulting in the best policy improvement guarantee. Similarly, $D^{CF}_{CQL}$ can be bounded by $\frac{1}{\epsilon}-1$:
\begin{equation}
\begin{aligned}
D^{CF}_{CQL}(\bm{\pi}, \bm{\beta})(s)=&\sum^n_{i=1}\lambda_i \sum_{a^i}\pi^i(a^i|s)[\frac{\pi^i(a^i|s)}{\beta^i(a^i|s)}-1]\\
\le&\left(\frac{1}{\epsilon}\sum^n_{i=1}\lambda_i \sum_{a^i}\pi^i(a^i|s)[\pi^i(a^i|s)]\right)-1\\
\le&\frac{1}{\epsilon}\left(\sum^n_{i=1}\lambda_i\right)-1=\frac{1}{\epsilon}-1.
\end{aligned}
\end{equation}
\end{proof}
\subsection{Proof of Theorem~4.4}\label{app:pi}
We first show the theorem of safe policy improvement guarantee for MACQL and CFCQL, separately. Then we compare these two gaps.

MACQL has a safe policy improvement guarantee related to the number of agents $n$:
\begin{theorem} Given the discounted marginal state-distribution $d^{\bm{\pi}}_{\hat{M}}$, we define $\mathcal{B}(\bm{\pi},D)=\mathbb{E}_{s\sim d_{\hat{M}}^{\bm{\pi}}}[\sqrt{D\left(\bm{\pi}, {\bm{\beta}}\right)(s)+1}]$.
The policy $\bm{\pi}^*_{MA}(\bm{a}|s)$ is a $\zeta^{MA}$-safe policy improvement over $\bm{\beta}$ in the actual MDP $M$, i.e., $J(\bm{\pi}^*_{MA},M)\ge J(\bm{\beta},M)-\zeta^{MA}$, where $\zeta^{MA}=
2\left(\frac{C_{r, \delta}}{1-\gamma}+\frac{\gamma R_{\max } C_{T, \delta}}{(1-\gamma)^2}\right)\cdot
\frac{\sqrt{|A|}}{\sqrt{|\mathcal{D}(s)|}} \mathcal{B}(\bm{\pi}^*_{MA},D_{CQL})+\frac{\alpha}{1-\gamma}(\frac{1}{\epsilon^n}-1)-(J(\bm{\pi}^*, \hat{M})-J(\hat{\bm{\beta}}, \hat{M}))$. 
\label{thm:pi}
\end{theorem}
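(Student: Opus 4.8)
The plan is to adapt the safe-policy-improvement argument of Theorem~3.6 in \citet{kumar2020conservative} to the multi-agent regularizer, with two changes: the empirical regularized objective of $\bm{\pi}^*_{MA}$ is compared against the true optimal policy $\bm{\pi}^*$ rather than against $\bm{\beta}$, and the per-state conservatism is controlled by the agent-number-dependent bound $D_{CQL}(\bm{\pi},\bm{\beta})(s)\le \tfrac{1}{\epsilon^n}-1$ already derived in Eq.~\ref{eqn:updcql}. I would start from the trivial decomposition $J(\bm{\pi}^*_{MA},M)\ge J(\bm{\pi}^*_{MA},\hat{M})-|J(\bm{\pi}^*_{MA},M)-J(\bm{\pi}^*_{MA},\hat{M})|$, which reduces the claim to (i) bounding the true-versus-empirical return gap of $\bm{\pi}^*_{MA}$, and (ii) lower-bounding the empirical return $J(\bm{\pi}^*_{MA},\hat{M})$ using that $\bm{\pi}^*_{MA}$ solves Eq.~\ref{eqn:mapistar}.

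For (i) I would invoke the concentration inequality of \citet{kumar2020conservative} in its refined form, which bounds $|J(\bm{\pi},M)-J(\bm{\pi},\hat{M})|$ by $\big(\tfrac{C_{r,\delta}}{1-\gamma}+\tfrac{\gamma R_{\max}C_{T,\delta}}{(1-\gamma)^2}\big)\,\mathbb{E}_{s\sim d^{\bm{\pi}}_{\hat{M}}}\big[\tfrac{\sqrt{|A|}}{\sqrt{|\mathcal{D}(s)|}}\sqrt{D_{CQL}(\bm{\pi},\bm{\beta})(s)+1}\big]$; in the theorem's notation the expectation over $d^{\bm{\pi}^*_{MA}}_{\hat{M}}$ is exactly $\tfrac{\sqrt{|A|}}{\sqrt{|\mathcal{D}(s)|}}\mathcal{B}(\bm{\pi}^*_{MA},D_{CQL})$. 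Applying this once to $\bm{\pi}^*_{MA}$ and once to $\bm{\beta}$ (whose divergence factor is $\sqrt{D_{CQL}(\bm{\beta},\bm{\beta})+1}=1\le\sqrt{D_{CQL}(\bm{\pi}^*_{MA},\bm{\beta})+1}$, so its error is dominated) and summing yields the leading coefficient $2\big(\tfrac{C_{r,\delta}}{1-\gamma}+\tfrac{\gamma R_{\max}C_{T,\delta}}{(1-\gamma)^2}\big)\tfrac{\sqrt{|A|}}{\sqrt{|\mathcal{D}(s)|}}\mathcal{B}(\bm{\pi}^*_{MA},D_{CQL})$. For (ii), writing $\tilde{J}_{MA}(\bm{\pi}):=J(\bm{\pi},\hat{M})-\tfrac{\alpha}{1-\gamma}\mathbb{E}_{d^{\bm{\pi}}_{\hat{M}}}[D_{CQL}(\bm{\pi},\bm{\beta})]$, the optimality $\tilde{J}_{MA}(\bm{\pi}^*_{MA})\ge\tilde{J}_{MA}(\bm{\pi}^*)$ gives $J(\bm{\pi}^*_{MA},\hat{M})\ge J(\bm{\pi}^*,\hat{M})-\tfrac{\alpha}{1-\gamma}\mathbb{E}_{d^{\bm{\pi}^*}_{\hat{M}}}[D_{CQL}(\bm{\pi}^*,\bm{\beta})]$ after discarding the non-negative term $\tfrac{\alpha}{1-\gamma}\mathbb{E}_{d^{\bm{\pi}^*_{MA}}_{\hat{M}}}[D_{CQL}(\bm{\pi}^*_{MA},\bm{\beta})]$; bounding the remaining divergence by $\tfrac{1}{\epsilon^n}-1$ via Eq.~\ref{eqn:updcql} produces the conservatism penalty $\tfrac{\alpha}{1-\gamma}(\tfrac{1}{\epsilon^n}-1)$.

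It then remains to assemble $\zeta^{MA}$. I would split $J(\bm{\pi}^*,\hat{M})=J(\hat{\bm{\beta}},\hat{M})+(J(\bm{\pi}^*,\hat{M})-J(\hat{\bm{\beta}},\hat{M}))$, keep $-(J(\bm{\pi}^*,\hat{M})-J(\hat{\bm{\beta}},\hat{M}))$ as the stated empirical-improvement term of $\zeta^{MA}$, and convert $J(\hat{\bm{\beta}},\hat{M})$ back to $J(\bm{\beta},M)$ through the same concentration bound (its divergence factor being $1$, i.e. the $\bm{\beta}$-error already folded into the factor-$2$ term). Collecting the three contributions recovers the claimed $\zeta^{MA}$, and $J(\bm{\pi}^*_{MA},M)\ge J(\bm{\beta},M)-\zeta^{MA}$ follows. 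I expect the main obstacle to be the faithful import of the refined CQL concentration inequality rather than any new estimate: one must track which policy's occupancy measure weights the $\sqrt{D_{CQL}(\cdot)+1}$ factor, justify consolidating the two separate sampling-error terms (for $\bm{\pi}^*_{MA}$ and for $\bm{\beta}$) into the single $2\,\mathcal{B}(\bm{\pi}^*_{MA},D_{CQL})$ coefficient, and keep the empirical behavior policy $\hat{\bm{\beta}}$ distinct from the true $\bm{\beta}$ throughout; once these are fixed, the substitution $D_{CQL}\le\tfrac{1}{\epsilon^n}-1$ is the same routine step already used for Theorem~4.3.
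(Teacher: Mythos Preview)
Your proposal is correct and follows essentially the same route as the paper. The paper's proof is even terser than yours: it directly invokes Theorem~3.6 of \citet{kumar2020conservative} to obtain the intermediate bound $J(\bm{\pi}^*_{MA},M)\ge J(\bm{\beta},M)-\big(2(\tfrac{C_{r,\delta}}{1-\gamma}+\tfrac{\gamma R_{\max}C_{T,\delta}}{(1-\gamma)^2})\tfrac{\sqrt{|A|}}{\sqrt{|\mathcal{D}(s)|}}\mathcal{B}(\bm{\pi}^*_{MA},D_{CQL})-(J(\bm{\pi}^*_{MA},\hat{M})-J(\hat{\bm{\beta}},\hat{M}))\big)$ (your steps (i) and the $\bm{\beta}$-conversion together), and then substitutes the already-proved Eq.~\ref{eqn:boundjpima}, $J(\bm{\pi}^*_{MA},\hat{M})\ge J(\bm{\pi}^*,\hat{M})-\tfrac{\alpha}{1-\gamma}(\tfrac{1}{\epsilon^n}-1)$, which is exactly your step (ii); you have simply unpacked the cited CQL result into its constituent concentration and consolidation steps.
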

\begin{proof} 
We can first get a $J(\bm{\pi}^*_{MA},\hat{M})$-related policy improvement guarantee following the proof of Theorem~3.6 in \citet{kumar2020conservative}:
\begin{equation}\label{eqn:oripi}
\begin{aligned}
    J(\bm{\pi}^*_{MA},M)\ge &J(\bm{\beta},M)-\bigg(
    2\left(\frac{C_{r, \delta}}{1-\gamma}+\frac{\gamma R_{\max } C_{T, \delta}}{(1-\gamma)^2}\right)\cdot
\frac{\sqrt{|A|}}{\sqrt{|\mathcal{D}(s)|}} \mathcal{B}(\bm{\pi}^*_{MA},D_{CQL})\\
&-(J(\bm{\pi}^*_{MA}, \hat{M})-J(\hat{\bm{\beta}}, \hat{M}))
\bigg)
\end{aligned}
\end{equation}
According to Eq.~\ref{eqn:mapistar}, $\bm{\pi}^*_{MA}$ is obtained by optimizing $J(\bm{\pi}, \hat{M})$ with a $D_{CQL}$-related regularizer. And Theorem~4.3 shows that $D_{CQL}$ can be extremely large when the team size expands, which may severely change the optimization objective and affects the shape of the optimization plane. Therefore, $J(\bm{\pi}^*_{MA}, \hat{M})$ may be extremely low, and keeping $J(\bm{\pi}^*_{MA}, \hat{M})$ in Eq.~\ref{eqn:oripi} results in a mediocre policy improvement guarantee. To bound $J(\bm{\pi}^*_{MA}, \hat{M})$, we introduce Eq.~\ref{eqn:boundjpima} into Eq.~\ref{eqn:oripi}, we get the following:
\begin{equation}
\begin{aligned}
    J(\bm{\pi}^*_{MA},M)\ge& J(\bm{\beta},M)-\bigg(
    2\left(\frac{C_{r, \delta}}{1-\gamma}+\frac{\gamma R_{\max } C_{T, \delta}}{(1-\gamma)^2}\right)\cdot
\frac{\sqrt{|A|}}{\sqrt{|\mathcal{D}(s)|}} \mathcal{B}(\bm{\pi}^*_{MA},D_{CQL})\\
&+\frac{\alpha}{1-\gamma}(\frac{1}{\epsilon^n}-1)
-(J(\bm{\pi}^*, \hat{M})-J(\hat{\bm{\beta}}, \hat{M}))
\bigg)\\
\end{aligned}
\end{equation}
This complete the proof.
\end{proof}
We can get a similar $\zeta^{CF}$ satisfying $J(\bm{\pi}^*_{CF},M)\ge J(\bm{\beta},M)-\zeta^{CF}$ for CFCQL, which is independent of $n$:
\begin{equation}
    \zeta^{CF}=2\left(\frac{C_{r, \delta}}{1-\gamma}+\frac{\gamma R_{\max } C_{T, \delta}}{(1-\gamma)^2}\right)\cdot
\frac{\sqrt{|A|}}{\sqrt{|\mathcal{D}(s)|}} \mathcal{B}(\bm{\pi}^*_{CF},D^{CF}_{CQL})+\frac{\alpha}{1-\gamma}(\frac{1}{\epsilon}-1)-(J(\bm{\pi}^*, \hat{M})-J(\hat{\bm{\beta}}, \hat{M}))
\label{eqn:zetacf}
    \end{equation}
Then we can prove Theorem~4.4.
\begin{proof}
Subtract $\zeta^{CF}$ from $\zeta^{MA}$, and we get:
 \begin{equation}
     \zeta^{MA}-\zeta^{CF}=2\left(\frac{C_{r, \delta}}{1-\gamma}+\frac{\gamma R_{\max } C_{T, \delta}}{(1-\gamma)^2}\right)
\frac{\sqrt{|A|}}{\sqrt{|\mathcal{D}(s)|}}\left(\mathcal{B}(\bm{\pi}^*_{MA},D_{CQL})-\mathcal{B}(\bm{\pi}^*_{CF},D^{CF}_{CQL})
\right)+\frac{\alpha}{1-\gamma}(\frac{1}{\epsilon^n}-\frac{1}{\epsilon})
    \end{equation}
Let the right side $\ge 0$, and we can get
\begin{equation}
   n\ge \log_{\frac{1}{\epsilon}}\left[\max\left( 1,\frac{1}{\epsilon}+\frac{2}{\alpha}\frac{\sqrt{|A|}}{\sqrt{|\mathcal{D}(s)|}}\left(C_{r, \delta}+\frac{\gamma R_{\max} C_{T, \delta}}{1-\gamma}\right)\cdot
    \left[\mathcal{B}\left(\bm{\pi}^*_{CF},D^{CF}_{CQL}\right)-\mathcal{B}\left(\bm{\pi}^*_{MA},D_{CQL}\right)\right]\right)\right]
\end{equation}
According to Theorem~4.3,
\begin{equation}
    \mathcal{B}\left(\bm{\pi}^*_{CF},D^{CF}_{CQL}\right)=\mathbb{E}_{s\sim d^{\bm{\pi}^*_{CF}}_{\hat{M}}}[\sqrt{D^{CF}_{CQL}(\bm{\pi}^*_{CF}, \bm{\beta})(s)+1}]\le \mathbb{E}_{s\sim d^{\bm{\pi}^*_{CF}}_{\hat{M}}}[\sqrt{\frac{1}{\epsilon}-1+1}]=\frac{1}{\sqrt{\epsilon}}
\end{equation}
In the meantime, we have
\begin{equation}
    \mathcal{B}\left(\bm{\pi}^*_{CF},D^{CF}_{CQL}\right)=\mathbb{E}_{s\sim d^{\bm{\pi}^*_{MA}}_{\hat{M}}}[\sqrt{D_{CQL}(\bm{\pi}^*_{MA}, \bm{\beta})(s)+1}]\ge \mathbb{E}_{s\sim d^{\bm{\pi}^*_{MA}}_{\hat{M}}}[\sqrt{D_{CQL}(\bm{\beta}, \bm{\beta})(s)+1}]=1
\end{equation}
Therefore, we can relax the lower bound of $n$ to a constant that
\begin{equation}
    n\ge \log_{\frac{1}{\epsilon}}\left(\frac{1}{\epsilon}+\frac{2}{\alpha}\frac{\sqrt{|A|}}{\sqrt{|\mathcal{D}(s)|}}(C_{r, \delta}+\frac{\gamma R_{\max} C_{T, \delta}}{1-\gamma})\cdot
    (\frac{1}{\sqrt{\epsilon}}-1)\right)
\end{equation}

\end{proof}

\section{Implement Details}

\subsection{Derivation of the Update Rule}\label{app:updaterule}

To utilize the Eq.~4 for policy optimization, following the analysis in the Section 3.2 in \citet{kumar2020conservative}, we formally define optimization problems over each $\mu^i(a^i|s)$ by adding a regularizer $R(\mu^i)$. As shown below, we mark the modifications from the Eq.~4 in red.

\begin{equation}\label{appeqn:cftd}
\begin{aligned}
    \mathop{\min}\limits_{Q}\red{\mathop{\max}\limits_{\bm{\mu}}}\ \alpha\bigg[\sum^n_{i=1}\lambda_i&\mathbb{E}_{s\sim \mathcal{D}, a^i\sim\red{\mu^i}, \bm{a}^{-i}\sim\bm{\beta}^{-i}}[Q(s,\boldsymbol{a})]
    -\mathbb{E}_{s\sim\mathcal{D},\boldsymbol{a}\sim \boldsymbol{\beta}}[Q(s,\boldsymbol{a})]\bigg]\\&+
    \frac{1}{2} \mathbb{E}_{s,\boldsymbol{a},s^{\prime}\sim \mathcal{D}}\bigg[(Q(s,\boldsymbol{a})-\hat{\mathcal{T}}^{\boldsymbol{\pi}} \hat{Q}_{k}(s,\boldsymbol{a}))^{2}\bigg]+\red{\sum^n_{i=1}\lambda_i R(\mu^i)},
\end{aligned}
\end{equation}
By choosing different regularizer, there are a variety of instances within CQL family. As recommended in \citet{kumar2020conservative}, we choose $R(\mu^i)$ to be the KL-divergence against a Uniform distribution over action space, i.e., $R(\mu^i)=-D_{KL}(\mu^i,Unif(a^i))$. Then we can get the following objective for $\mu^i$:
\begin{equation}
    \max_{\mu^i} \mathbb{E}_{x\sim \mu^i(x)}[f(x)]+\mathcal{H}(\mu^i), \quad s.t. \ \sum_x\mu^i(x)=1, \mu^i(x)\ge 0, \forall x,
\end{equation}
where $\forall s, f(x)=Q(s,x, \bm{a}^{-i})$.  The optimal solution is:
\begin{equation}\label{eqn:normexp}
    \mu^{i*}(x)=\frac{1}{Z}\exp(f(x)),
\end{equation}
where $Z$ is the normalization factor, i.e., $Z=\sum_x \exp(f(x))$. Plugging this back into Eq.~\ref{appeqn:cftd}, we get:
\begin{equation}\label{appeqn:prac}
\begin{aligned}
     \min_{Q} \ \alpha \mathbb{E}_{s\sim \mathcal{D}}\bigg[\sum^n_{i=1}\lambda_i&\mathbb{E}_{\bm{a}^{-i}\sim\bm{\beta}^{-i}}[\log \sum_{a^i}\exp(Q(s,\boldsymbol{a}))]
    -\mathbb{E}_{\boldsymbol{a}\sim \boldsymbol{\beta}}[Q(s,\boldsymbol{a})]\bigg]\\&+
    \frac{1}{2} \mathbb{E}_{s,\boldsymbol{a},s^{\prime}\sim \mathcal{D}}\bigg[(Q(s,\boldsymbol{a})-\hat{\mathcal{T}}^{\boldsymbol{\pi}_k} \hat{Q}_{k}(s,\boldsymbol{a}))^{2}\bigg].
\end{aligned}
\end{equation}

\subsection{Details for Computing $\lambda$}\label{app:lambda}

To compute $\lambda$, we need an explicit expression of $\pi^i$ and $\beta^i$. In the setting of discrete action space, as we use Q-learning, $\pi^i$ can be expressed by the Boltzman policy, i.e.
\begin{equation}
    \pi^i(a^i_j)=\frac{\exp\left(\mathbb{E}_{\bm{a}^{-i}\sim\bm{\beta}^{-i}}Q(s,a^i_j,\bm{a^{-i}})\right)}{\sum_k\exp\left(\mathbb{E}_{\bm{a}^{-i}\sim\bm{\beta}^{-i}}Q(s,a^i_k,\bm{a^{-i}})\right)}
\end{equation}
We use behaviour cloning to pre-train a parameterized $\bm{\beta}(s)$ with a three-level fully-connected network and MLE(Maximum Likelihood Estimation) loss. 

With the explicit expression of $\pi^i$ and $\beta^i$, we can directly compute $\lambda$ with Eq.~8 and Eq.~9. While, in practice, we find the $\mathbb{E}_{\pi^i}\frac{\pi^i(s)}{\beta^i(s)}$ may introduce extreme variance as its large scale and fluctuations, which will hurt the performance. Instead, we take the logarithm of it and further reduced it to the Kullback-Leibler Divergence as follow:

\begin{equation}\label{appeqn:finallambda}
    \forall i,s, \lambda_i(s)=\frac{\exp\left(-\tau D_{KL}(\pi^i(s)||\beta^i(s))\right)}{\sum^n_{j=1}\exp\left(-\tau D_{KL}(\pi^j(s)||\beta^j(s))\right)},
\end{equation}

For continuous action space, we use the deterministic policy like in MADDPG, whose policy distribution can be regared as a Dirac delta function. Therefore, we approximate $\mathbb{E}_{\pi^j}\frac{\pi^j(s)}{\beta^j(s)}$ by the following:
\begin{equation}
    \mathbb{E}_{\pi^j}\frac{\pi^j(s)}{\beta^j(s)}  \approx \frac{1}{\beta^j({\pi^j(s)|s})}
\end{equation}
Then we need to obtain an explicit expression of $\beta^i$. We first train a VAE~\citep{kingma2013auto} from the dataset to obtain the lower bound of $\beta^i$. Let $p_{\phi}(a,z|s)$ and $q_{\varphi}(z|a,s)$ be the decoder and the encoder of the trained VAE, respectively. According to \citet{wu2022supported}, $\beta^j(a^j|s)$ can be explicitly estimated by~(We omit the superscript $j$ for brevity):
\begin{equation}
    \begin{aligned}
\log \beta_\phi(a \mid s) & =\log \mathbb{E}_{q_{\varphi}(z \mid a, s)}\left[\frac{p_\phi(a, z \mid s)}{q_{\varphi}(z \mid a, s)}\right] \\
& \approx \mathbb{E}_{z^{(l)} q_{\varphi}(z \mid a, s)}\left[\log \frac{1}{L} \sum_{l=1}^L \frac{p_\phi\left(a, z^{(l)} \mid s\right)}{q_{\varphi}\left(z^{(l)} \mid a, s\right)}\right] \\
& \stackrel{\text { def }}{=} \widehat{\log \pi_\beta}(a \mid s ; \varphi, \phi, L) .
\end{aligned}
\end{equation}
Therefore, we can sample from the VAE $L$ times to estimate $\beta^i$. The sampling error reduces as $L$ increases.
\section{Experimental Details}

\subsection{Tasks}\label{app:tasks}

$Equal\_Line$ is a multi-agent task which we design by simplify the space shape of $Equal\_Space$ to one-dimension. There are $n$ agents and they are randomly initialized to the interval $[0,2]$. The state space is a  a one-dimensional bounded region in $[0,\max(10,2*n)]$ and the local action space is a discrete, eleven-dimensional space, i.e. $[0,-0.01,-0.05,-0.1,-0.5,-1,0.01,0.05,0.1,0.5,1]$, which represents the moving direction and distance at each step. The reward is shared by the agents and formulated as $10*(n-1)\frac{min\_dis-last\_step\_min\_dis}{line\_length}$, which will spur the agents to cooperate to spread out and keep the same distance between each other.

For Multi-agent Particle Environment and Multi-agent Mujoco, we adopt the open-source implementations from \citet{lowe2017multi}\footnote{\url{https://github.com/openai/multiagent-particle-envs}} and \citet{peng2021facmac}\footnote{\url{https://github.com/schroederdewitt/multiagent\_mujoco}} respectively. And we use the datasets and the adversary agents provided by \citet{pan2022plan}.

For StarCraft II Micromanagement Benchmark, we use the open-source implementation from \citet{samvelyan2019starcraft}\footnote{\url{https://github.com/oxwhirl/smac}} and choose four maps with different difficulty and number of agents as the experimental scenarios, which is summarized in Table \ref{tab:sc2maps}. We construct our own datasets with  QMIX~\citep{rashid2018qmix} by collecting training or evaluating data.

\begin{table}[htbp]
  \centering
  \caption{The details of tested maps in the StarCraft II micromanagement benchmark}
    \begin{tabular}{rrrrr}
\cmidrule{1-4}    \multicolumn{1}{l}{\textbf{Maps}} & \multicolumn{1}{l}{\textbf{Agents}} & \multicolumn{1}{l}{\textbf{Enemies}} & \multicolumn{1}{l}{\textbf{Difficulty}} &  \\
\cmidrule{1-4}    \multicolumn{1}{l}{2s3z} & \multicolumn{1}{l}{2 Stalkers \& 3 Zealots} & \multicolumn{1}{l}{2 Stalkers \& 3 Zealots} & \multicolumn{1}{l}{Easy} &  \\
    \multicolumn{1}{l}{3s\_vs\_5z} & \multicolumn{1}{l}{3 Stalkers} & \multicolumn{1}{l}{5 Zealots} & \multicolumn{1}{l}{Easy} &  \\
    \multicolumn{1}{l}{5m\_vs\_6m} & \multicolumn{1}{l}{5 Marines} & \multicolumn{1}{l}{6 Marines} & \multicolumn{1}{l}{Hard} &  \\
    \multicolumn{1}{l}{6h\_vs\_8z} & \multicolumn{1}{l}{6 Hydralisks} & \multicolumn{1}{l}{8 Zealots} & \multicolumn{1}{l}{Super Hard} &  \\
\cmidrule{1-4}          &       &       &       &  \\
    \end{tabular}%
  \label{tab:sc2maps}%
\end{table}%

\subsection{StarCraft II datasets collection}\label{app:datasets}
The datasets are made 
based on the training process or trained model of QMIX\citep{rashid2018qmix}. Specially,  the $Medium$ or $Expert$ datasets are sampled by executing a partially-pretrained policy with a medium performance level or a fully-pretrained policy. The $Medium-Replay$ datasets are exactly the replay buffer during training until the policy reaches the medium performance. The $Mixed$ datasets are the equal mixture of $Medium$ and $Expert$ datasets. All datasets contain five thousand trajectories, except for the $Medium-Replay$.

\subsection{Baselines}\label{app:baseline}

\textbf{BC}: behavior cloning. In discrete action space, we train a three-level MLP network with MLE loss. In continuous action space,  we use the method of explicit estimation of behavior density in \citet{wu2022supported}, which is modified from a VAE~\citep{kingma2013auto} estimator.
\textbf{TD3-BC}\citep{fujimoto2021minimalist}: One of the SOTA single agent offline algorithm, simply adding the BC term to TD3~\citep{fujimoto2018addressing}. We use the open-source implementation\footnote{\url{https://github.com/sfujim/TD3\_BC}} and modify it to a CTDE version with centralised critic. \textbf{IQL}\citep{kostrikov2021offline} and \textbf{AWAC}\citep{nair2020awac}: variants of advantage weighted behaviour cloning. We 
refer to the open-source implementation\footnote{\url{https://github.com/tinkoff-ai/CORL}} and implement a CTDE version similar to TD3-BC.
\textbf{MACQL}:naive extension of conservative Q-learning, as proposed in Sec.~3.3. We implement it based on the open-source implementation\footnote{\url{https://github.com/aviralkumar2907/CQL}}. As the joint action space is enormous, we sample $N$ actions for the logsumexp operation.
\textbf{MAICQ}\citep{yang2021believe}:multi-agent version of implicit constraint Q-learning by propose the decomposed multi-agent joint-policy under implicit constraint. We use the open-source implementation\footnote{\url{https://github.com/YiqinYang/ICQ}} in discrete action space and cite the experimental results in continuous action space from \citet{pan2022plan}.
\textbf{OMAR}\citep{pan2022plan}:uses zeroth-order optimization for better coordination among agents' policies, based on independent CQL~(\textbf{ICQL}). We cite the experimental results in continuous action space from \citet{pan2022plan} and implement a version in discrete action space based on the open-source implementation\footnote{\url{https://github.com/ling-pan/OMAR}}.
\textbf{MADTKD}\citep{tsengoffline}:uses decision transformer to represent each agent’s policy and trains with knowledge distillation. As lack of open-source implementation, We implement it based on the open-source implementation\footnote{\url{https://github.com/ReinholdM/Offline-Pre-trained-Multi-Agent-Decision-Transformer}} of another Decision Transformer based method \textbf{MADT}\citep{meng2021offline}.

\subsection{Resources}
We use $2$ servers to run all the experiments. Each one has 8*NVIDIA RTX 3090 GPUs, and 2*AMD 7H12 CPUs. Each setting is repeated for $5$ seeds. For one seed in SC2, it takes about $1.5$ hours. For MPE, $10$ minutes is enough. The experiments on MaMuJoCo cost the most, about $5$ hours for each seed.
\subsection{Code, Hyper-parameters and Reproducibility}
Please refer to our submitted anonymous repository\footnote{\url{https://anonymous.4open.science/r/CFCQL-7272}} for the code and the hyper-parameters of our method. For each dataset number $0,1,2,3,4$, we use the seed $0,1,2,3,4$, respectively.

\begin{table*}[htbp]
  \centering
  \caption{Complete results on Multi-agent Particle Environment.}
  \tablestyle{2pt}{1.0}
  \resizebox{\textwidth}{!}{
    \begin{tabular}{cccccccccc}
    \toprule
    \textbf{Env} & \textbf{Dataset} & \textbf{MAICQ} & \textbf{MATD3-BC} & \textbf{ICQL} & \textbf{OMAR} & \textbf{MACQL} & \textbf{IQL} &\textbf{AWAC} &\textbf{CFCQL} \\
    \midrule
    \multirow{4}[2]{*}{\textbf{CN}} & Random & 6.3±3.5 & 9.8±4.9 & 24.0±9.8 & 34.4±5.3 & 45.6±8.7 &	5.5±1.1	&0.5±3.7& \textbf{62.2±8.1} \\
          & Medium-replay & 13.6±5.7 & 15.4±5.6 & 20.0±8.4 & 37.9±12.3 & 25.5±5.9 &	10.8±4.5&	2.7±3& \textbf{52.2±9.6} \\
          & Medium & 29.3±5.5 & 29.3±4.8 & 34.1±7.2 & 47.9±18.9 & 14.3±20.2 &28.2±3.9&	25.7±4.1& \textbf{65.0±10.2} \\
          & Expert & 104.0±3.4 & 108.3±3.3 & 98.2±5.2 & \textbf{114.9±2.6} & 12.2±31 & 103.7±2.5&	103.3±3.5&112±4 \\
    \midrule
    \multirow{4}[2]{*}{\textbf{PP}} & Random & 2.2±2.6 & 5.7±3.5 & 5.0±8.2 & 11.1±2.8 & 25.2±11.5 & 1.3±1.6	&0.2±1.0&\textbf{78.5±15.6} \\
          & Medium-replay & 34.5±27.8 & 28.7±20.9 & 24.8±17.3 & 47.1±15.3 & 11.9±9.2 & 	23.2±12&	8.3±5.3&\textbf{71.1±6} \\
          & Medium & 63.3±20.0 & 65.1±29.5 & 61.7±23.1 & 66.7±23.2 & 55±43.2 &53.6±19.9	&50.9±19.0& \textbf{68.5±21.8} \\
          & Expert & 113.0±14.4 & 115.2±12.5 & 93.9±14.0 & 116.2±19.8 & 108.4±21.5 & 109.3±10.1&	106.5±10.1&\textbf{118.2±13.1} \\
    \midrule
    \multirow{4}[2]{*}{\textbf{World}} & Random & 1.0±3.2 & 2.8±5.5 & 0.6±2.0 & 5.9±5.2 & 11.7±11 & 2.9±4.0&	-2.4±2.0&\textbf{68±20.8} \\
          & Medium-replay & 12.0±9.1 & 17.4±8.1 & 29.6±13.8 & 42.9±19.5 & 13.2±16.2 & 41.5±9.5&	8.9±5.1&\textbf{73.4±23.2} \\
          & Medium & 71.9±20.0 & 73.4±9.3 & 58.6±11.2 & 74.6±11.5 & 67.4±48.4 & 70.5±15.3&	63.9±14.2&\textbf{93.8±31.8} \\
          & Expert & 109.5±22.8 & 110.3±21.3 & 71.9±28.1 & 110.4±25.7 & 99.7±31 & 107.8±17.7&	107.6±15.6&\textbf{119.7±26.4} \\
    \bottomrule
    \end{tabular}%
    }
  \label{tab:wholempe}%
\end{table*}%
\begin{figure}[ht]
\begin{center}
\includegraphics[width=\linewidth]{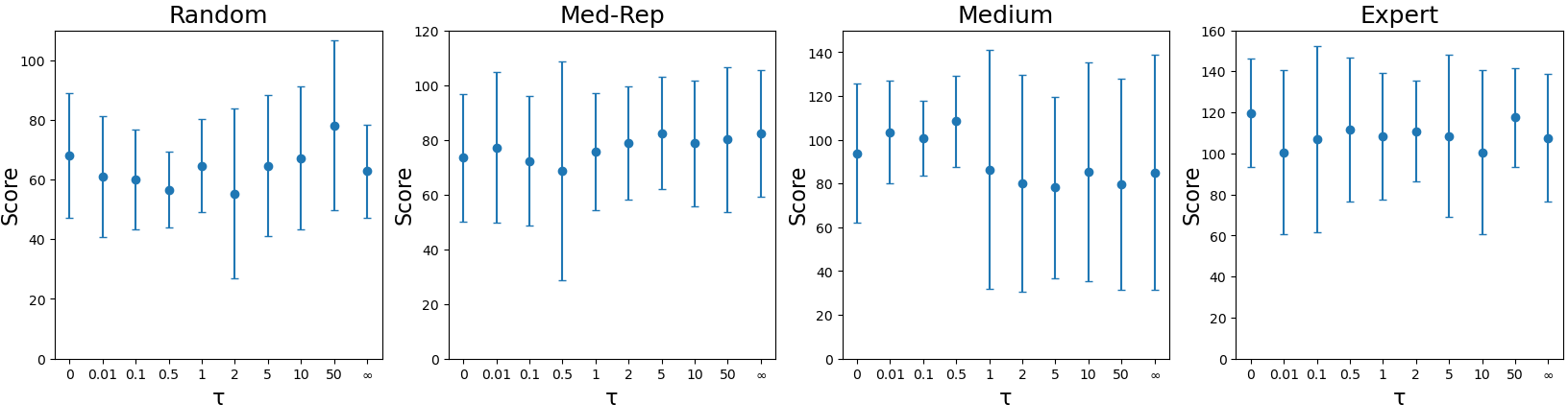}
\end{center}
\caption{Ablations of $\tau$ on World.}
\label{img:abla_tau_continuous}
\end{figure}
\begin{figure}[htb]
\begin{center}
\includegraphics[width=\linewidth]{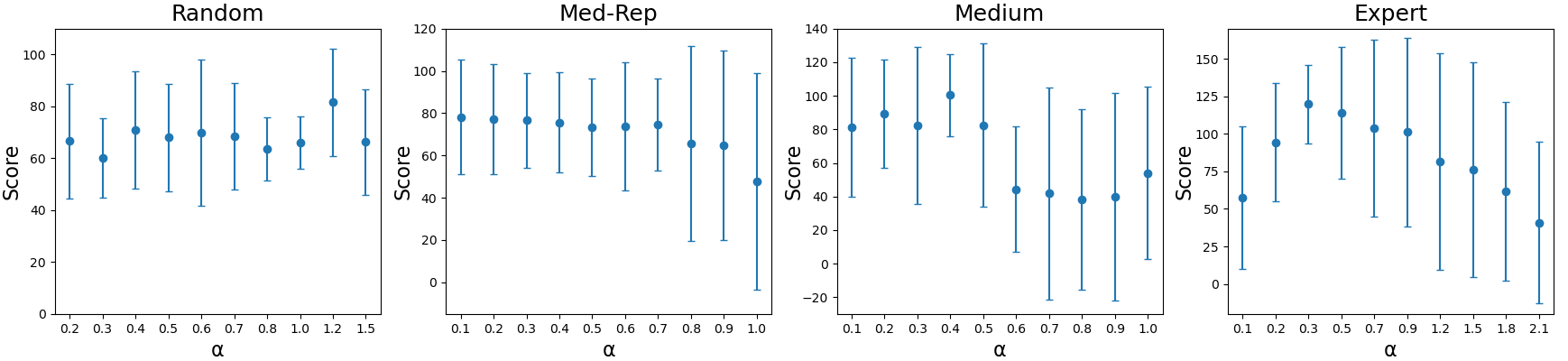}
\end{center}
\caption{Ablations of $\alpha$ on World.}
\label{img:abla_alpha}

\end{figure}
\section{More results}
\subsection{Complete Results on MPE}\label{app:extra_mpe}
Table~\ref{tab:wholempe} shows the complete results of our methods and more baselines on Multi-agent Particle Environment. Some results are cited from \citet{pan2022plan}.

\subsection{Temperature Coefficient in Continuous Action Space}\label{app:coef_mpe}
We carry out ablations of $\tau$ on MPE's map World in Fig.~\ref{img:abla_tau_continuous}. We find that although. the best $\tau$ differs in different datasets, the overall performance is not sensitive to $\tau$, which verifies the theoretical analysis that any simplex of $\lambda$ that $\sum^n_{i=1}\lambda_i=1$ can induce an underestimated value function. 

\subsection{Ablation on CQL $\alpha$}\label{sec:alpha}
We carry out ablations of $\alpha$ on MPE's map World in Fig.~\ref{img:abla_alpha}. We find that $\alpha$ plays a more important role for team performance on narrow distributions~(e.g., $Expert$ and $Medium$) than that on wide distributions~(e.g., $Random$ and $Medium-Replay$).

\subsection{Component Analysis on Counterfactual style}\label{sec:cfway}
\begin{table}[htbp]
  \centering
  \caption{Component Analysis on MaMuJoCo. CF\_T: computing target Q by $\mathbb{E}_{i\sim \text{Unif}(1,n)}\mathbb{E}_{s',\bm{a}^{-i}\sim\mathcal{D},a^i\sim \pi^i}Q_{\hat{\theta}}(s,\bm{a})$. CF\_P: the policy improvement~(PI) by Eq.~10, otherwise using MADDPG's PI.}
\resizebox{83mm}{13mm}{
    \tablestyle{4pt}{1.0}
    \begin{tabular}{c|cccc}
    \toprule
    Dataset & Default & +CF\_T & -CF\_P & MACQL \\
    \midrule
    Random & 39.7±4.0 & \textbf{48.7±1.8} & 23.9±9.2 & 5.3±0.5 \\
    Med-Rep & \textbf{59.5±8.2} & 58.9±9.6 & 43.5±5.6 & 36.7±7.1 \\
    Medium & \textbf{80.5±9.6} & 76.2±12.1 & 43.8±7.8 & 51.5±26.7 \\
    Expert & \textbf{118.5±4.9} & 118.1±6.9 & 3.7±3.1 & 50.1±20.1 \\
    \bottomrule
    \end{tabular}}%
  \label{tab:mujoco_cf}%
\end{table}%
In the environment MaMuJo, except for the counterfactual Q function, we also analyze whether the conuterfactual treatment in CFCQL can be incorporated  in other components and help further improvement in Table~\ref{tab:mujoco_cf}. We find that the counterfactual policy improvement is critical for this environment. With CF\_P, the method shows great performance gain on narrow data distribution, e.g., the $Expert$ dataset.

\begin{figure}[hbp]
\begin{center}
\includegraphics[width=0.5\linewidth]{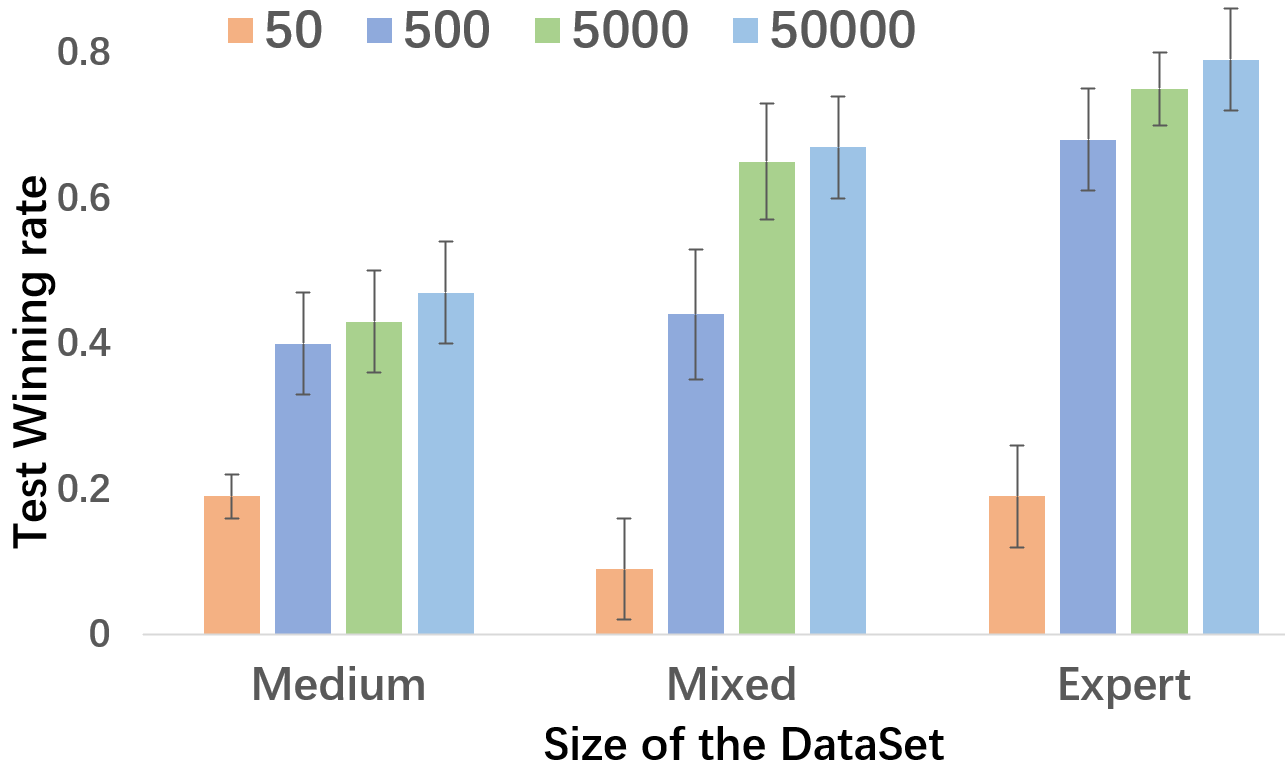}
\end{center}
\vspace{-10pt}
\caption{Hyperparameters examination on the size of datasets.}
\vspace{-5pt}
\label{img:dataset_ablation}
\end{figure}

\subsection{Analysis on Size of Dataset}
An additional study examines dataset size effects. We generate a full dataset with $50,000$ trajectories for each type in map $6h\_vs\_8z$, creating smaller datasets by sampling $5, 50, 500, 5000$ trajectories. Fig.~\ref{img:dataset_ablation} displays CFCQL's testing winning rate for varying dataset sizes. 
It's notable that to ensure fairness, the maximum number of training steps for all datasets and algorithms on the SMAC environment is fixed at 1e7. In this additional study, however, we trained the CFCQL algorithm until convergence to eliminate the impact of underfitting.
The results demonstrate that larger datasets contribute to improved convergence performances, thus confirming the scalability of CFCQL for larger data samples. 


\section{Discussion}
\subsection{Overestimation in Offline RL}
We offer an intuitive explanation for the phenomenon of overestimation caused by distribution shift in this section. For a rigorous proof, we refer readers to the related works.

In offline RL, a key challenge arises due to the distribution mismatch between the behavior policy—the policy responsible for generating the data—and the target policy, which is the policy one aims to improve. This mismatch can result in extrapolation errors during value function estimation, often leading to overestimation. Specifically, during the policy evaluation stage, the dataset may not encompass all possible state-action pairs in the Markov Decision Process (MDP), leading to inaccurate $Q$ function estimates for unseen state-action pairs. These estimates may be either too high or too low compared to the actual $Q$ values. Subsequently, in the policy improvement stage, the algorithm tends to shift towards actions that appear to offer higher rewards based on these overestimated values. Unlike online RL, which can implement the current training policy in the environment to obtain real feedback and thereby correct the policy's direction, offline RL lacks this corrective mechanism unless careful loss design is employed. When using common bootstrapping methods like temporal difference learning for training the $Q$ function, these overestimation errors can propagate, affecting estimates for other state-action pairs. This can result in a chain reaction of overestimations, potentially causing an exponential explosion of the $Q$ function.

\end{document}